\renewcommand*{\backref}[1]{}
\renewcommand*{\backrefalt}[4]{%
    \ifcase #1 (Not cited.)%
    \or        (Cited on page~#2.)%
    \else      (Cited on pages~#2.)%
    \fi}
\newcommand{\std}[1]{\,\text{\scriptsize$\pm$\,#1}}
\def\eqref#1{equation~\ref{#1}}
\def\1{\bm{1}}
\def\eps{{\epsilon}}
\DeclareMathAlphabet{\mathsfit}{\encodingdefault}{\sfdefault}{m}{sl}
\SetMathAlphabet{\mathsfit}{bold}{\encodingdefault}{\sfdefault}{bx}{n}
\newcommand{\E}{\mathbb{E}}
\newcommand{\R}{\mathbb{R}}
\newcommand{\Var}{\mathrm{Var}}
\DeclareMathOperator*{\argmin}{arg\,min}
\DeclarePairedDelimiter{\norm}{\Vert}{\Vert}
\newcommand{\mc}[1]{\mathcal{#1}}
\def\ddefloop#1{\ifx\ddefloop#1\else\ddef{#1}\expandafter\ddefloop\fi}
\def\ddef#1{\expandafter\def\csname 
bb#1\endcsname{\ensuremath{\mathbb{#1}}}}
\def\ddefloop#1{\ifx\ddefloop#1\else\ddef{#1}\expandafter\ddefloop\fi}
\def\ddef#1{\expandafter\def\csname 
b#1\endcsname{\ensuremath{\mathbf{#1}}}}
\def\ddef#1{\expandafter\def\csname 
c#1\endcsname{\ensuremath{\mathcal{#1}}}}
\def\ddef#1{\expandafter\def\csname 
h#1\endcsname{\ensuremath{\widehat{#1}}}}
\def\ddef#1{\expandafter\def\csname 
hc#1\endcsname{\ensuremath{\widehat{\mathcal{#1}}}}}
\def\ddef#1{\expandafter\def\csname 
t#1\endcsname{\ensuremath{\widetilde{#1}}}}
\def\ddef#1{\expandafter\def\csname 
tc#1\endcsname{\ensuremath{\widetilde{\mathcal{#1}}}}}
\newsavebox\CBox
\newcommand{\sgn}{\mathrm{sgn}}
\newcommand{\pars}[1]{\left( #1 \right)}
\newcommand{\abs}[1]{\left| #1 \right|}
\newcommand{\bracks}[1]{\left[ #1 \right]}
\newcommand{\set}[1]{\left\{ #1 \right\}}
\newcommand{\inner}[2]{\left\langle #1,\, #2 \right\rangle}
\newcommand{\removed}[1]{}
\newcommand{\EE}[1]{\mathbb{E}\left[#1\right]}
\newcommand{\grad}{\nabla}
\newcommand{\diag}{\textrm{diag}}
\DeclareSymbolFont{bbold}{U}{bbold}{m}{n}
\DeclareSymbolFontAlphabet{\mathbbold}{bbold}
\newtheorem{theorem}{Theorem}
\newtheorem{lemma}[theorem]{Lemma}
\newtheorem{corollary}{Corollary}
\newtheorem{assumption}{Assumption}
\theoremstyle{plain}
\icmltitlerunning{\textsc{Stacey}: Promoting Stochastic Steepest Descent via Accelerated $\ell_p$-Smooth Nonconvex Optimization}
\begin{document}

\twocolumn[
\icmltitle{\textsc{Stacey}: Promoting Stochastic Steepest Descent via Accelerated\\ \texorpdfstring{$\ell_p$}{lp}-Smooth Nonconvex Optimization}



\icmlsetsymbol{equal}{*}

\begin{icmlauthorlist}
\icmlauthor{Xinyu Luo}{equal,purdue}
\icmlauthor{Cedar Site Bai}{equal,purdue}
\icmlauthor{Bolian Li}{equal,purdue}
\icmlauthor{Petros Drineas}{purdue}
\icmlauthor{Ruqi Zhang}{purdue}
\icmlauthor{Brian Bullins}{purdue}
\end{icmlauthorlist}

\icmlaffiliation{purdue}{Department of Computer Science, Purdue University, Indiana, USA}

\icmlcorrespondingauthor{Xinyu Luo}{luo466@purdue.edu}
\icmlcorrespondingauthor{Cedar Site Bai}{bai123@purdue.edu}
\icmlcorrespondingauthor{Bolian Li}{li4468@purdue.edu}

\icmlkeywords{Machine Learning, ICML}

\vskip 0.3in
]



\printAffiliationsAndNotice{\icmlEqualContribution} 

\begin{abstract}

While popular optimization methods such as SGD, AdamW, and Lion depend on steepest descent updates in either $\ell_2$ or $\ell_\infty$ norms, there remains a critical gap in handling the non-Euclidean structure observed in modern deep networks training.
In this work, we address this need by introducing a new \emph{accelerated} $\ell_p$ steepest descent algorithm, called \textsc{Stacey}, which uses interpolated primal-dual iterate sequences to effectively navigate non-Euclidean smooth optimization tasks.
In addition to providing novel theoretical guarantees for the foundations of our algorithm, we empirically compare our approach against these popular methods
on tasks including image classification and language model (LLM) pretraining, demonstrating both faster convergence and higher final accuracy. We further evaluate different values of $p$ across various models and datasets, underscoring the importance and efficiency of non-Euclidean approaches over standard Euclidean methods. Code can be found at \url{https://github.com/xinyuluo8561/Stacey}.
\end{abstract}

\section{Introduction}
Stochastic first-order methods have proven essential for efficiently training modern deep learning models. Beyond the basic stochastic gradient descent (SGD) algorithm~\citep{robbins1951stochastic} and its momentum-based variants~\citep{nesterov1983method, polyak1964some}, a variety of adaptive methods have been developed, such as AdaGrad~\citep{duchi2011adaptive}, Adam~\citep{kingma2014adam}, and AdamW~\citep{loshchilovdecoupled}, which incorporate second-moment gradient information to provide per-coordinate scaling. Meanwhile, more recent methods like signSGD~\citep{bernstein2018signsgd} and Lion~\citep{chen2024symbolic}
focus on using the \emph{sign} of the (stochastic) gradient.

Although these algorithms have shown impressive empirical performance (sometimes exceeding that of standard adaptive methods), their theoretical analyses typically rely on Euclidean (i.e., $\ell_2$) or $\ell_\infty$-based assumptions.
Specifically, crucial to guarantees of finding $\epsilon$-approximate stationary points~\citep{carmon2017convex, ghadimi2013stochastic, jin2017escape} are two related choices: (i) the norm used to define stationarity, and (ii) the corresponding notion of smoothness. Classical analyses in deep learning often adopt Euclidean smoothness~\citep{ghadimi2013stochastic}, while signSGD relies on $\ell_\infty$-based assumptions~\citep{bernstein2018signsgd, balles2020geometry}.

Yet, there is mounting evidence—both theoretical and empirical—suggesting that a more flexible $\ell_p$ perspective can capture the geometric structure of complex deep network objectives far better than \emph{either} $p=2$ or $p=\infty$ alone~\citep{adolphs2019ellipsoidal, cohen2021gradient, ghorbani2019investigation, jiang2024does, li2020hessian, papyan2018full}. For instance, depending on the shape of the loss surface and the distribution of gradients across coordinates, certain $\ell_p$ norms with $2 < p < \infty$ may lead to faster descent or improved generalization.
This leaves open a significant gap: 
How can we develop and analyze optimizers in \emph{alternative} non-Euclidean regimes, namely those with \emph{general} $\ell_p$ norms where $p\in (2, \infty)$?

To address this question, we propose a novel approach that we term \textsc{Stacey} (\textbf{St}ochastic \textbf{St}eepest Descent with \textbf{Ac}c\textbf{e}leration). Our development builds on insights from both $\ell_p$-steepest descent and non-Euclidean acceleration techniques~\citep{allen2017linear, diakonikolas2024complementary, nemirovskii1985optimal, nesterov2005smooth}, combining primal-dual~\citep{diakonikolas2019approximate} iterates with an interpolation scheme designed specifically for $\ell_p$-based smoothness. While the notion of acceleration is well understood in the classical (Euclidean) setting~\citep{nesterov1983method}, extending it to arbitrary $\ell_p$ norms introduces a fundamental trade-off: although we may attain improved geometry dependence (and thus potentially faster practical convergence in certain regimes), the theoretical “acceleration exponent” necessarily decreases from $2$ toward $1$ as $p$ grows large~\citep{guzman2015lower, nemirovskii1985optimal}. Nonetheless, by situating \textsc{Stacey} within this continuum of non-Euclidean optimizers, we can reap meaningful benefits over purely Euclidean (e.g., SGD) and purely sign-based (e.g., signSGD) methods on modern, large-scale tasks.

\paragraph{Our Contributions:} 
\begin{itemize}
\item \textbf{Accelerated $\ell_p$-based method.} Drawing inspiration from primal-dual interpolation techniques in the convex setting~\citep{allen2017linear, nesterov2005smooth, diakonikolas2024complementary}, we design \textsc{Stacey}, an \emph{accelerated} $\ell_p$ descent algorithm specifically tailored to non-Euclidean smooth optimization (Section~\ref{sec:accel}).
\item \textbf{General $\ell_p$ convergence guarantees for non-convex problems.} We first establish \emph{stochastic $\ell_p$ steepest descent} guarantees $\EE{\norm{\grad f(\hat{x})}_{p^*}^{p^*}} \leq \eps$ at a rate of $O(\epsilon^{-4})$, under standard variance-bounded and $\ell_p$-smoothness assumptions, where we let $p^* := \frac{p}{p-1}$  (Section~\ref{sec:lp_converge}). Our results strictly generalize previous guarantees for signSGD ($p=\infty$).
\item \textbf{Practical performance on large-scale tasks.} We compare \textsc{Stacey} against popular optimizers such as SGD, Adam, AdamW, and Lion on tasks ranging from image classification to pretraining large language models (Section~\ref{sec:experiments}). Our experiments show that \textsc{Stacey} can converge faster and achieve higher accuracy than these baselines, particularly when the geometry of the objective departs significantly from the Euclidean setting.
\item \textbf{Flexible norm choices.} We further evaluate different values of $p\in (2, \infty)$ across various model architectures and datasets, illustrating the potential advantages of tailoring the choice of norm to the problem geometry.
\end{itemize}

Taken together, our results highlight the importance of \emph{non-Euclidean} perspectives for contemporary machine learning tasks, offering both theoretical insight and practical improvement over classical ($\ell_2$-based) and sign-based ($\ell_\infty$-based) optimizers.

\section{Related Work}
\paragraph{Methods for non-Euclidean geometries.}
A significant line of research has studied \emph{sign-based} methods, which can be viewed as (stochastic) steepest descent under the $\ell_\infty$ norm. For instance, \citet{bernstein2018signsgd} introduced signSGD and analyzed its convergence properties through an $\ell_2$ majorization-based smoothness condition,\footnote{We provide a comparison of $\ell_2$ majorization and $\ell_p$ smoothness conditions in Appendix~\ref{app:l2maj}.} showing that in expectation, $\|\nabla f(\hat{x})\|_1$ can be driven below a prescribed threshold. Similarly, \citet{balles2020geometry} investigated the geometric underpinnings of sign-based updates, highlighting how they relate to $\ell_\infty$ steepest descent. Recent work on Lion~\citep{chen2024symbolic} and its generalization Lion-$\mathcal{K}$~\citep{chen2024lion} further underscores the empirical benefits of sign-driven coordinates in large-scale tasks.

However, sign-based approaches ($p = \infty$) represent just one extreme of non-Euclidean geometry. The other well-studied example is the classical $\ell_2$-based regime (e.g., vanilla SGD)~\citep{ghadimi2013stochastic, robbins1951stochastic}, where standard notions of Euclidean smoothness and approximate stationarity $\|\nabla f(\hat{x})\|_2 \le \epsilon$ underpin core theoretical results. Interpolating between these extremes ($\ell_2$ and $\ell_\infty$) by considering $\ell_p$ norms for $2 < p < \infty$ has remained comparatively underexplored in the stochastic, non-convex setting. One challenge is that, unlike in the Euclidean case, the coordinate-scaling in an $\ell_p$ steepest-descent update is not merely a straightforward unbiased estimator of the full-batch direction, making a standard “SGD-style” analysis more involved.

\paragraph{Methods for curvature-aware optimization.} Another line of work exploits local geometry by incorporating second-order information, such as the Hessian or Fisher information matrix, and develops techniques for their efficient approximation. K-FAC \citep{martens2015optimizing} approximates the Fisher information matrix using layer-wise Kronecker-factored preconditioners for efficient second-order updates. Shampoo \citep{gupta2018shampoo,morwani2025a,vyas2025soap} similarly employs per-dimension Kronecker-factored preconditioners to approximate the gradient’s second-moment matrix, enabling scalable curvature-aware optimization for tensor-structured parameters. Sophia \citep{liu2024sophia} further improves scalability by approximating the diagonal of the Hessian using second-order momentum. In contrast, our method, {\sc Stacey}, is a first-order approach that leverages non-Euclidean geometry, rather than local curvature, through a differing $\ell_p$ norm.

\paragraph{Why $\ell_p$-based methods help for large models.} 
A key motivation for exploring $\ell_p$-norms with $p \in (2,\infty)$ stems from recent studies on the Hessian spectrum of large neural networks~\citep{ghorbani2019investigation, papyan2018full}. In particular, \citet{ghorbani2019investigation} provide evidence that the Hessian eigenvalue density can be highly non-uniform, leading to large curvature in certain subspaces while others remain comparatively flat. Under standard $\ell_2$-based (Euclidean) assumptions, these directions of high curvature can inflate the global smoothness parameter $L_2$, potentially slowing convergence or complicating optimization. By transitioning to $\ell_p$-smoothness for $p>2$, one can sometimes leverage a reasonable Lipschitz constant $L_p$, as high-curvature directions may not always dominate in the same way.

Formally, a function $f$ is $\ell_p$-smooth if, for all $x, y \in \R^d$,
$$\|\nabla f(x) - \nabla f(y)\|_{p^*} \;\leq\; L_p\,\|x-y\|_{p},$$
where $p^* = \frac{p}{p-1}$ is the dual value. Thus, the choice of $p$ shifts how curvature in different coordinates or subspaces affects $\nabla f$. Because large-scale models often exhibit anisotropic Hessians~\citep{cohen2021gradient, li2020hessian}, an $\ell_p$ analysis can better mirror the true geometric structure of the objective. This observation aligns with analyses in \cite{balles2020geometry}, where sign-based methods (i.e., $\ell_\infty$) can exploit flat directions effectively; by continuity, $\ell_p$ norms for $p\in(2,\infty$) may interpolate between purely Euclidean and purely sign-driven behaviors.

Two main factors motivate the study of general $\ell_p$-norms ($2 < p < \infty$) in large-scale training:

\begin{enumerate}
\item \textbf{Hessian Geometry and Tail Behavior.} Large neural networks often exhibit Hessians whose eigenvalues and singular vectors follow nontrivial (sometimes heavy-tailed) distributions \citep{ghorbani2019investigation, papyan2018full}. By choosing $p$ to better accommodate outlier directions or to exploit more uniform curvature across coordinates, one can leverage better effective $\ell_p$-smoothness constant $L_p$.
\item \textbf{Balancing Sparse and Dense Updates.} Methods at $p=\infty$ (sign-based) produce coordinate-wise updates of the same magnitude, while $\ell_2$-based approaches “spread out” updates proportionally to gradient magnitudes. In high dimensions, intermediate $\ell_p$ steps can yield a better trade-off between these extremes, potentially improving both speed of descent and generalization \citep{cohen2021gradient, li2020hessian}.
\end{enumerate}

\paragraph{Trade-offs for non-Euclidean acceleration.} 
Alongside this matter of defining (and parameterizing) smoothness, there is a second lens through which we observe the potential for general $p$, \emph{namely that of acceleration}~\citep{allen2017linear, bai2024faster, nemirovskii1985optimal, nesterov1983method, nesterov2005smooth}. As we further discuss in Section~\ref{sec:accel}, there is a fundamental trade-off (for convex settings) between the rate of acceleration and the norm used to measure the initial distance to the optimal solution. Concretely, it is well known that, for convex $f(x)$ that is $L$-smooth with respect to $\norm{\cdot}_2$, the classic accelerated gradient descent (AGD) method of~\cite{nesterov1983method} converges at the rate $f(x_T)-f(x^*) \leq O\left(\frac{L\norm{x_0 - x^*}_2^2}{T^2}\right)$, and this rate is indeed tight~\citep{nesterov2018lectures, nemirovskij1983problem}. Importantly, we emphasize the appearance here of $\norm{\cdot}_2$ not only in measuring smoothness, but also for the $\norm{x_0 - x^*}_2^2$ term.

Unfortunately, the standard analysis of AGD does not readily adapt to alternative notions of smoothness, as the design of the algorithm is, in a sense, \emph{specific to Euclidean settings}; we refer the reader to the work of~\cite{allen2017linear} for further discussion of this basic incompatibility. Nevertheless, several works~\citep{diakonikolas2024complementary, nemirovskii1985optimal, nesterov2005smooth, song2021unified}---including that of \citet{allen2017linear}---with optimal rates in the Euclidean setting \citep{nesterov2018lectures, bai2025tight}, have provided techniques for \emph{accelerating in non-Euclidean settings}.
In particular, the approach of~\citet{nemirovskii1985optimal}, for convex $f(x)$ that is $L_p$-smooth with respect to $\norm{\cdot}_p$, leads to guarantees of the form 
\begin{equation}\label{eq:pacc}
    f(x_T)-f(x^*) \leq O\left(\frac{L_p\norm{x_0 - x^*}_p^2}{T^\frac{p+2}{p}}\right).
\end{equation}
(See also, e.g., Theorem 2 in~\citep{diakonikolas2024complementary}.) Moreover, these rates are likewise known to be tight~\citep{guzman2015lower}.

Looking closely at these convergence guarantees, we may first note that, for $p=2$, the rate in \eqref{eq:pacc} recovers that of~\citet{nesterov1983method}. On the other hand, for $p \rightarrow \infty$, while $\norm{x_0 - x^*}_p^2$ can, at best, be as small as $d^{\frac{2}{p}-1}\norm{x_0 - x^*}_2^2$, we also have that $\lim_{p\to\infty}T^{-\frac{p+2}{p}} = T^{-1}$---\emph{in which case the benefit of acceleration disappears altogether}---and in fact this (limiting) rate essentially matches that of \emph{unaccelerated} $\ell_\infty$ steepest descent~\citep{kelner2014almost}.
Consequently, these observations reveal the opportunity afforded by (non-Euclidean) $\ell_p$-based accelerated methods \textbf{for other values of  $p < \infty$}, resulting from this trade-off between the \emph{dependence on the problem geometry} and the \emph{rate of acceleration}.

\section{Preliminaries and Assumptions}\label{sec:prelims}
Throughout we let $\norm{\cdot}$ and $\norm{\cdot}_*$ denote a general norm and its dual, respectively. In addition, we specify $\norm{\cdot}_p$ to denote the standard $\ell_p$ norm ($1 \leq p \leq \infty$) and $\norm{\cdot}_{p^*} := \norm{\cdot}_{p/(p-1)}$ to denote its dual norm. For symmetric $M \in \R^{d \times d}$ s.t. $M \succ 0$, we further let $\norm{\cdot}_M$ denote the standard matrix norm, i.e., $\norm{x}_M = \sqrt{x^\top M x}$ for $x \in \R^d$. For a vector $v \in \R^d$, we use superscript, i.e., $v^{(i)}$ to denote the $i^{th}$ coordinate of $v$, and we let $\diag(v)$ denote the diagonal matrix such that $\diag(v)_{i,i} = v^{(i)}$. We use subscript, e.g., $\theta_t$, to denote a vector in the $t^{th}$ iteration. For brevity, we use $g_t$ for the true gradient $\nabla f(\theta_t)$ and $\tilde{g}_t$ for the stochastic gradient $g(\theta_t)$. We use $\sgn\pars{\cdot}$ to denote the sign function and $\mathbb{I}_{\bracks{\cdot}}$ to denote the indicator function.

We may also consider the following equivalent definition of $\ell_p$ smoothness.
\begin{assumption}[Smoothness in $\ell_p$ norm] \label{asm:smooth-lp}
     Let $f : \R^d \mapsto \R$ be $L$-smooth w.r.t. $\norm{\cdot}_p$ for $p \geq 2$. Then, for all $x, y \in \R^d$,
    \begin{equation*}
        \abs{f(y) - f(x) - \grad f(x)^\top (y-x)} \leq \frac{L}{2}\norm{y-x}_p^2\ .
    \end{equation*}
\end{assumption}

\section{Accelerating Stochastic Steepest Descent}\label{sec:accel}


\begin{algorithm}[t]
\caption{$\textsc{Stacey}_{(p, 2)}$ Optimizer} \label{alg:p2}
    \textbf{input} $p, \beta_1, \beta_2, \alpha, \tau, \eta, \epsilon, \lambda, f$ \\
    \textbf{initialize} $\theta_0, z_0, m_0 \leftarrow 0$
    \begin{algorithmic}[1]
        \WHILE{$\theta_{t+1}$ not converged} 
        \STATE $\tilde{g}_t \leftarrow \tilde{g}$ s.t. $\mathbb{E}[\tilde{g}] = \grad f(\theta_t)$
        \STATE $c_{t+1} \leftarrow \beta_1 m_t + (1-\beta_1)\tilde{g}_t$   
        \STATE $s^\epsilon(x) = [s^\epsilon_1(x), \cdots, s^\epsilon_d(x)]^\top$ where $$s_i^\epsilon\pars{x} = \frac{x^{(i)}}{\abs{x^{(i)}}^{\frac{p-2}{p-1}}+\epsilon}, \ \forall \ i \in [d]$$
        \STATE $y_{t+1} \leftarrow \theta_t - \eta_t s_\epsilon\pars{c_{t+1}}$ 
        \STATE $z_{t+1} = z_t - \alpha c_{t+1}$
        \STATE $\theta_{t+1} = \tau z_{t+1} + (1-\tau)y_{t+1} - \eta_t \lambda \theta_t$
        \STATE $m_{t+1} = \beta_2 m_t + (1-\beta_2)\tilde{g}_t$
        \ENDWHILE
        \STATE \textbf{return} $\theta_{t+1}$
    \end{algorithmic}
\end{algorithm}

Inspired by previous techniques in non-Euclidean acceleration~\citep{allen2017linear, nesterov2005smooth}---as well as their successes, e.g., ~\citep{bullins2020highly, jambulapati2019direct, sherman2017area, sidford2018coordinate}---we introduce a practical acceleration scheme called {\sc Stacey} (Algorithm \ref{alg:p2}), which is \emph{specifically designed for $\ell_p$-based methods}. Central to our approach is its reliance on \emph{two} sequences of stochastic steps: 1) one sequence based on the standard $\ell_p$ steepest descent direction (line 5), which we show is theoretically well-grounded in the stochastic non-convex setting; 2) another sequence---whose combination with the first ultimately leads to acceleration---based on a gradient descent direction (line 6), whose details we will further discuss.

\begin{algorithm}[t]
\caption{$\textsc{Stacey}_{(p, p)}$ Optimizer} \label{alg:pp}
    \textbf{input} $p, \beta_1, \beta_2, \alpha, \tau, \eta, \epsilon, \lambda, f$ \\
    \textbf{initialize} $\theta_0, z_0, m_0 \leftarrow 0$
    \begin{algorithmic}[1]
        \WHILE{$\theta_{t+1}$ not converged} 
        \STATE $\tilde{g}_t \leftarrow \tilde{g}$ s.t. $\mathbb{E}[\tilde{g}] = \grad f(\theta_t)$
        \STATE $c_{t+1} \leftarrow \beta_1 m_t + (1-\beta_1)\tilde{g}_t$ 
        \STATE $s^\epsilon(x) = [s^\epsilon_1(x), \cdots, s^\epsilon_d(x)]^\top$ where $$s_i^\epsilon\pars{x} = \frac{x^{(i)}}{\abs{x^{(i)}}^{\frac{p-2}{p-1}}+\epsilon},\ \forall \ i \in [d]$$
        \STATE $y_{t+1} \leftarrow \theta_t - \eta_t s^\epsilon\pars{c_{t+1}}$ 
        \STATE $z^{(i)}_{t+1} = \frac{\abs{z^{(i)}_t}^{p-2}z^{(i)}_t - \alpha c^{(i)}_{t+1}}{\abs{\abs{z^{(i)}_t}^{p-2}z^{(i)}_t - \alpha c^{(i)}_{t+1}}^{\frac{p-2}{p-1}}+\epsilon}$, $\forall \ i \in [d]$
        \STATE $\theta_{t+1} = \tau z_{t+1} + (1-\tau)y_{t+1} - \eta_t \lambda \theta_t$
        \STATE $m_{t+1} = \beta_2 m_t + (1-\beta_2)\tilde{g}_t$
        \ENDWHILE
        \STATE \textbf{return} $\theta_{t+1}$
    \end{algorithmic}
\end{algorithm}

\subsection{Analyzing Stochastic \texorpdfstring{$\ell_p$}{lp} Descent}\label{sec:lp_converge}


In this section, we present the stochastic $\ell_p$ descent algorithm, which serves as the fundamental framework of our approach, and establish its convergence guarantees. As shown in Algorithm \ref{alg:ellp-descent}, its update step takes the unscaled form\footnote{This is in line with signSGD \citep{bernstein2018signsgd} compared to the scaled form in \citep{balles2020geometry}. In addition, we adopt the unscaled version for clearer convergence analysis and a more practical implementation.} of its counterpart in the deterministic setting $\theta_{t+1}^{(i)} = \theta_{t}^{(i)} - \eta \norm{g_t}_{p^\ast}^\frac{p-2}{p-1}\frac{g_t^{(i)}}{\abs{g_t^{(i)}}^\frac{p-2}{p-1}}$, which is derived from the closed form of \[\theta_{t+1} = \argmin_{\theta} \set{\inner{\eta g_t}{\theta - \theta_t} + \frac{1}{2}\norm{\theta - \theta_t}_p^2}.\] When $p = \infty$, Algorithm \ref{alg:ellp-descent} reduces exactly to signSGD \citep{bernstein2018signsgd}.

\begin{algorithm}[t] 
\caption{Stochastic $\ell_p$ Descent}
\label{alg:ellp-descent}
    \textbf{input} $p, \eta, f, \theta_0$
    \begin{algorithmic}[1] 
        \FOR{$t=0$ \textbf{to} $T-1$} 
        \STATE $s(x) = [s_1(x), \cdots, s_d(x)]^\top$ where $$s_i(x) = \frac{x^{(i)}}{|x^{(i)}|^{\frac{p-2}{p-1}}},\ \forall \ i \in [d]$$
        \STATE $\theta_{t+1} = \theta_t - \eta s\pars{\tilde{g}_t}$ \hfill$\triangleright\ \ \tilde{g}_t$ s.t. $\mathbb{E}[\tilde{g}_t] = \grad f(\theta_t)$
        \ENDFOR
        \STATE \textbf{return} $\theta_{T}$
    \end{algorithmic}
\end{algorithm}

For $p > 2$, we show in Theorem \ref{thm:main} that stochastic $\ell_p$ descent converges in expectation to an $\epsilon$-approximate stationary point with respect to the dual norm at a rate of $O(\eps^{-4})$, thereby generalizing the previous guarantees for signSGD ($p=\infty$). In addition, we provide here a proof sketch, deferring the complete proof to Appendix \ref{app:proof}. Curiously, as we will see, moving from the $\ell_2$ setting (or even from the $\ell_\infty$ setting) introduces certain technical considerations that need to be addressed non-trivially. As standard in stochastic and non-Euclidean settings~\cite{ghadimi2013stochastic, bernstein2018signsgd}, we rely on the following assumptions.

\begin{assumption}[Unbiased Estimate] \label{asm:unbiased-grad} The stochastic gradient $\tilde{g}$ is an unbiased estimate of the true gradient $g$. That is, $\E[\tilde{g}] = g$.
\end{assumption}
\begin{assumption} [Bounded Variance] \label{asm:coord-var} For some data $\xi$, the variance of each coordinate of the stochastic gradient is bounded, i.e., $\forall i \in [d]$, $\E[|\tilde{g}^{(i)} - g^{(i)}|^2] \leq \sigma_i^2$.
\end{assumption}
\begin{corollary} By Assumption \ref{asm:coord-var}, $\E[\norm{\tilde{g} - g}_2^2] \leq \sigma^2$ where for $\sigma \coloneqq \norm{\vec{\sigma}}_2$, $\vec{\sigma} = [\sigma_1, \cdots, \sigma_d]^\top$.
\end{corollary}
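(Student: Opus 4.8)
The plan is to reduce the statement to a coordinatewise computation. Since $\norm{\cdot}_2$ is the Euclidean norm on $\R^d$, I would first write
\[
    \norm{\tilde{g} - g}_2^2 = \sum_{i=1}^d \abs{\tilde{g}^{(i)} - g^{(i)}}^2 .
\]
Then, because $d$ is finite and each term is a nonnegative random variable, linearity of expectation lets me exchange the (finite) sum and the expectation, giving $\E\brk{\norm{\tilde{g} - g}_2^2} = \sum_{i=1}^d \E\brk{\abs{\tilde{g}^{(i)} - g^{(i)}}^2}$.

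Next I would invoke Assumption~\ref{asm:coord-var}, which bounds each summand by $\sigma_i^2$, to obtain $\E\brk{\norm{\tilde{g} - g}_2^2} \leq \sum_{i=1}^d \sigma_i^2$. Finally I would identify $\sum_{i=1}^d \sigma_i^2 = \norm{\vec{\sigma}}_2^2 = \sigma^2$ by the definition $\sigma \coloneqq \norm{\vec{\sigma}}_2$ with $\vec{\sigma} = [\sigma_1, \dots, \sigma_d]^\top$, which closes the argument.

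There is essentially no obstacle here: the only things being used are the definition of the $\ell_2$ norm, linearity of expectation over a finite index set, and the per-coordinate variance bound of Assumption~\ref{asm:coord-var}. If one wanted to be fully careful, the single point worth a line is that the exchange of sum and expectation is justified purely by finiteness of $d$ (no integrability subtlety arises since each $\E\brk{\abs{\tilde{g}^{(i)} - g^{(i)}}^2} \le \sigma_i^2 < \infty$). Accordingly the write-up should be two or three lines of display math rather than a structured argument.
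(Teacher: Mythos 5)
Your proof is correct and is exactly the argument the paper intends (the corollary is stated without proof precisely because it follows from expanding $\norm{\tilde{g}-g}_2^2$ coordinatewise, applying linearity of expectation, and summing the per-coordinate bounds $\sigma_i^2$ into $\norm{\vec{\sigma}}_2^2 = \sigma^2$). No gaps.
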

\begin{corollary} If the stochastic gradient is an $n$-sample mini-batch estimate, then
    $\forall i \in [d]$, $\E[|\tilde{g}^{(i)} - g^{(i)}|^2] \leq \frac{\sigma_i^2}{n}$.
\end{corollary}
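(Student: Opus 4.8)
The plan is to unfold the definition of the $n$-sample mini-batch estimator and reduce the claim to the standard "variance of an average" computation. Concretely, I would write $\tilde g = \frac{1}{n}\sum_{j=1}^n \tilde g_j$, where each $\tilde g_j$ is a single-sample stochastic gradient evaluated on an independently drawn datum $\xi_j$, so that by Assumption~\ref{asm:unbiased-grad} each $\tilde g_j$ satisfies $\E[\tilde g_j] = g$ and by Assumption~\ref{asm:coord-var} each coordinate satisfies $\E[|\tilde g_j^{(i)} - g^{(i)}|^2] \le \sigma_i^2$.

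First I would fix a coordinate $i \in [d]$ and note that $\tilde g^{(i)} - g^{(i)} = \frac{1}{n}\sum_{j=1}^n (\tilde g_j^{(i)} - g^{(i)})$ is an average of $n$ zero-mean, mutually independent random variables. Expanding the square and taking expectations, the cross terms $\E[(\tilde g_j^{(i)} - g^{(i)})(\tilde g_k^{(i)} - g^{(i)})]$ for $j \ne k$ vanish by independence together with unbiasedness, leaving $\E[|\tilde g^{(i)} - g^{(i)}|^2] = \frac{1}{n^2}\sum_{j=1}^n \E[|\tilde g_j^{(i)} - g^{(i)}|^2]$. Bounding each summand by $\sigma_i^2$ gives $\frac{1}{n^2}\cdot n\sigma_i^2 = \frac{\sigma_i^2}{n}$, which is exactly the stated inequality; summing over $i$ and invoking the preceding corollary would additionally yield $\E[\norm{\tilde g - g}_2^2] \le \sigma^2/n$.

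There is no real obstacle here — this is a one-line consequence of the elementary identity $\Var\!\big(\frac{1}{n}\sum_j X_j\big) = \frac{1}{n^2}\sum_j \Var(X_j)$ for independent $X_j$. The only point deserving care is making explicit the (otherwise implicit) modeling convention that the $n$ samples forming the mini-batch are drawn independently, since that independence is precisely what annihilates the covariance cross terms; without it one recovers only the trivial bound $\sigma_i^2$. I would therefore fold this independence assumption into the meaning of "$n$-sample mini-batch estimate" and present the remainder as the routine computation above.
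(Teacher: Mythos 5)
Your proof is correct and is exactly the standard argument the paper implicitly relies on (the paper states this corollary without proof): decompose the mini-batch gradient as an average of $n$ independent single-sample gradients, note the cross terms vanish by independence and unbiasedness, and bound each diagonal term by $\sigma_i^2$ via Assumption~\ref{asm:coord-var}. Your remark that independence of the mini-batch samples must be folded into the definition of ``$n$-sample mini-batch estimate'' is the right point of care, and nothing further is needed.
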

\begin{assumption}[Bounded gradient] \label{asm:bound-grad}For $G > 0$, $p \geq 2$, and $p^\ast$ where $\frac{1}{p} + \frac{1}{p^\ast} = 1$,  $\norm{\tilde{g}}_{p^\ast} \leq G$.
\end{assumption}
\begin{corollary} \label{cor:bound-grad} By Assumption \ref{asm:bound-grad}, we know that
\begin{itemize}
    \item [(a)] $\norm{g}_{p^\ast} = \norm{\E\bracks{\tilde{g}}}_{p^\ast} \leq \E\bracks{\norm{\tilde{g}}_{p^\ast}} \leq G$ with Jensen's inequality.
    \item [(b)] $\forall \ i \in [d]$, $\abs{\tilde{g}^{(i)}} \leq G$ and $\abs{g^{(i)}} \leq G$.
\end{itemize}
    
\end{corollary}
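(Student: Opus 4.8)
The plan is to obtain both parts directly from the stated assumptions: Part~(a) is an immediate application of Jensen's inequality, and Part~(b) follows from the standard fact that $\ell_q$ norms are non-increasing in $q$.

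\textbf{Part (a).} The map $v \mapsto \norm{v}_{p^\ast}$ is a norm on $\R^d$ and hence convex. By Assumption~\ref{asm:unbiased-grad} we have $g = \E\bracks{\tilde{g}}$, so Jensen's inequality gives
\[
\norm{g}_{p^\ast} = \norm{\E\bracks{\tilde{g}}}_{p^\ast} \le \E\bracks{\norm{\tilde{g}}_{p^\ast}}.
\]
Assumption~\ref{asm:bound-grad} bounds $\norm{\tilde{g}}_{p^\ast} \le G$ for every realization of $\tilde{g}$, so the right-hand side is at most $G$, which proves~(a).

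\textbf{Part (b).} Since $p \ge 2$, we have $p^\ast = p/(p-1) \in [1,2]$, so $\norm{\cdot}_{p^\ast}$ is a genuine norm and, by monotonicity of $\ell_q$ norms in $q$, for any $v \in \R^d$ and any coordinate $i \in [d]$,
\[
\abs{v^{(i)}} \le \max_{j \in [d]} \abs{v^{(j)}} = \norm{v}_\infty \le \norm{v}_{p^\ast}.
\]
Taking $v = \tilde{g}$ and invoking Assumption~\ref{asm:bound-grad} yields $\abs{\tilde{g}^{(i)}} \le G$; taking $v = g$ and invoking Part~(a) yields $\abs{g^{(i)}} \le G$.

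\textbf{Anticipated difficulty.} There is no substantive obstacle here: the corollary is a one-line consequence of Jensen's inequality together with norm monotonicity. The only point that warrants care is that Assumption~\ref{asm:bound-grad} must be read as a pointwise (almost-sure) bound on $\tilde{g}$, so that it survives passing to the expectation in Part~(a); everything else is routine.
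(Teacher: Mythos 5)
Your proof is correct and matches the paper's own (inline) justification: part (a) is exactly the Jensen's inequality argument the corollary statement itself cites, and part (b) follows from the elementary bound $\abs{v^{(i)}} \le \norm{v}_{p^\ast}$ applied to $\tilde{g}$ (via Assumption~\ref{asm:bound-grad}) and to $g$ (via part (a)). No gaps.
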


We briefly justify the necessity of Assumption \ref{asm:bound-grad}, which arises from additional technical challenges. Specifically, the coordinate-wise re-scaled update introduces bias under standard assumptions, preventing the direct application of conventional expectation and variance analyses as we later elaborate in detail. Notably, similar assumptions are also made when analyzing problems with complex structures, such as stochastic compositional \citep{wang2017stochastic}, composite \citep{wang2024online, duchi2011dual}, and federated optimization \citep{li2020On, yuan2021federated, bai2024local}. Now we introduce the convergence result for $\ell_p$ steepest descent in the stochastic non-convex setting.



\begin{restatable}[Main]{theorem}{thmmain}   \label{thm:main} Running Algorithm \ref{alg:ellp-descent} on some (possibly non-convex) function $f$ that satisfies Assumptions \ref{asm:smooth-lp} to \ref{asm:bound-grad} yields 
\begin{align*}
        \E\bracks{\frac{1}{T}\sum_{t=0}^{T-1} \norm{g_t}_{p^\ast}^{p^\ast}}& \leq \frac{f_0 - f^\ast}{\eta T} + \frac{L\eta G^\frac{2}{p-1}}{2}\\
        &\quad + \frac{1}{T}\sum_{t=0}^{T-1}\frac{\frac{2p-1}{p-1}G^\frac{1}{p-1} \norm{\vec{\sigma}}_1}{\sqrt{n_t}} 
    \end{align*}   
where $f_0 = f(\theta_0)$ and $f^\ast = f(\theta^\ast)$, $n_t$ is the batch size in iteration $t$ and $L$, $\vec{\sigma}$, and $G$ are constants from Assumption \ref{asm:smooth-lp}, \ref{asm:coord-var}, \ref{asm:bound-grad}. Further letting the batch size $n_t = T$, the number of gradient call is $N=T^2$ for $T$ iterations. With $\eta = \frac{1}{L^\frac{1}{2}G^\frac{1}{p-1}T^\frac{1}{2}}$ we have 
\begin{align*}
        &\E\bracks{\frac{1}{T}\sum_{t=0}^{T-1} \norm{g_t}_{p^\ast}^{p^\ast}} \leq \\&  \frac{1}{N^\frac{1}{4}}\bracks{L^\frac{1}{2}G^\frac{1}{p-1}\pars{f_0 - f^\ast + \frac{1}{2}} + \frac{2p-1}{p-1}G^\frac{1}{p-1} \norm{\vec{\sigma}}_1},
    \end{align*} 
   i.e., Algorithm \ref{alg:ellp-descent} takes $N \in \mc{O}\pars{\epsilon^{-4}}$ gradient queries to reach an $\epsilon$-approximate stationary point.
\end{restatable}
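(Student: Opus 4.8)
The strategy is a standard descent-lemma argument in which the single non-routine ingredient is controlling the \emph{bias} of the rescaled stochastic direction $s(\tilde g_t)$. First I would apply the $\ell_p$-smoothness inequality (Assumption~\ref{asm:smooth-lp}) with $y=\theta_{t+1}=\theta_t-\eta\, s(\tilde g_t)$ and $x=\theta_t$, giving
\[
f(\theta_{t+1})\;\le\; f(\theta_t)-\eta\,\langle g_t,\, s(\tilde g_t)\rangle+\tfrac{L\eta^2}{2}\,\|s(\tilde g_t)\|_p^2 .
\]
Everything then rests on two elementary identities that follow directly from the definition of $s$: $\langle \tilde g_t, s(\tilde g_t)\rangle=\|\tilde g_t\|_{p^\ast}^{p^\ast}$ and $\|s(\tilde g_t)\|_p^p=\|\tilde g_t\|_{p^\ast}^{p^\ast}$. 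The second yields $\|s(\tilde g_t)\|_p^2=\|\tilde g_t\|_{p^\ast}^{2/(p-1)}\le G^{2/(p-1)}$ by Assumption~\ref{asm:bound-grad}, so the curvature term is at most $\tfrac{L\eta^2}{2}G^{2/(p-1)}$.

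For the linear term I would split $\langle g_t, s(\tilde g_t)\rangle=\|\tilde g_t\|_{p^\ast}^{p^\ast}+\langle g_t-\tilde g_t,\, s(\tilde g_t)\rangle$ and take the expectation conditioned on $\theta_t$. Since $p^\ast=p/(p-1)\in(1,2]$, $x\mapsto |x|^{p^\ast}$ is convex, so Jensen applied coordinatewise gives $\E_t\|\tilde g_t\|_{p^\ast}^{p^\ast}=\sum_i\E_t|\tilde g_t^{(i)}|^{p^\ast}\ge\|g_t\|_{p^\ast}^{p^\ast}$. The crux is then the bias term $\bigl|\E_t\langle g_t-\tilde g_t, s(\tilde g_t)\rangle\bigr|$. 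Writing $s_i(x)=\sgn(x^{(i)})|x^{(i)}|^{1/(p-1)}=:h(x^{(i)})$ and $\delta_i:=\tilde g_t^{(i)}-g_t^{(i)}$ (so $\E_t\delta_i=0$ and $\E_t\delta_i^2\le\sigma_i^2/n_t$), I would use the mean-zero property to recenter, $\E_t[\delta_i\,h(\tilde g_t^{(i)})]=\E_t[\delta_i(h(g_t^{(i)}+\delta_i)-h(g_t^{(i)}))]$, then bound the increment by the Hölder modulus of continuity of $h$, namely $|h(a)-h(b)|\le 2^{1-1/(p-1)}|a-b|^{1/(p-1)}$ (valid because $h$ is an odd power map with exponent $\le1$), obtaining $|\E_t[\delta_i\,h(\tilde g_t^{(i)})]|\lesssim \E_t|\delta_i|^{p^\ast}$. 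Finally I would peel off one factor $|\delta_i|^{1/(p-1)}\le(2G)^{1/(p-1)}$ using Corollary~\ref{cor:bound-grad}(b) and bound the leftover $\E_t|\delta_i|\le\sigma_i/\sqrt{n_t}$ by Cauchy--Schwarz and Assumption~\ref{asm:coord-var}; summing over $i$ and carefully tracking constants gives $\bigl|\E_t\langle g_t-\tilde g_t, s(\tilde g_t)\rangle\bigr|\le\tfrac{2p-1}{p-1}G^{1/(p-1)}\|\vec\sigma\|_1/\sqrt{n_t}$.

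Combining the three bounds yields $\E_t f(\theta_{t+1})\le f(\theta_t)-\eta\|g_t\|_{p^\ast}^{p^\ast}+\eta\,\tfrac{2p-1}{p-1}G^{1/(p-1)}\tfrac{\|\vec\sigma\|_1}{\sqrt{n_t}}+\tfrac{L\eta^2}{2}G^{2/(p-1)}$. Rearranging, taking total expectation, summing over $t=0,\dots,T-1$ so that the $f(\theta_t)$ terms telescope, using $f(\theta_T)\ge f^\ast$, and dividing by $\eta T$ gives the first displayed bound. Substituting $n_t=T$ (hence $N=T^2$) and $\eta=L^{-1/2}G^{-1/(p-1)}T^{-1/2}$ makes each of the three terms scale as $N^{-1/4}$ with exactly the stated prefactors, which gives the second bound and the $N\in\mathcal{O}(\epsilon^{-4})$ conclusion.

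The main obstacle is the bias estimate. Because $h$ has an infinite-slope singularity at the origin it is merely Hölder- rather than Lipschitz-continuous, so the cross term $\langle g_t-\tilde g_t, s(\tilde g_t)\rangle$ does not vanish in expectation and cannot be handled by the usual first-order/variance cancellation available for plain SGD; this is exactly where Assumption~\ref{asm:bound-grad} earns its keep, letting us trade the unwanted $|\delta_i|^{1/(p-1)}$ factor produced by the Hölder exponent for a harmless power of $G$. Keeping the resulting constant well-behaved as $p\downarrow2$ and as $p\uparrow\infty$ requires using the sharp Hölder constant $2^{1-1/(p-1)}$ and treating the sign-flip event $\sgn(\tilde g_t^{(i)})\neq\sgn(g_t^{(i)})$ with some care.
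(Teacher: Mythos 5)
Your proposal is correct and reaches the stated bound, but the way you control the bias term---the one genuinely delicate step---differs from the paper's argument. The paper decomposes $-\langle g_t, s(\tilde g_t)\rangle$ around $s(g_t)$, so the error term $\langle g_t, s(g_t)-s(\tilde g_t)\rangle$ must be split according to whether $\sgn(g_t^{(i)})$ and $\sgn(\tilde g_t^{(i)})$ agree: the sign-flip part is bounded via Markov's inequality on $\mathbb{P}[\sgn(g_t^{(i)})\neq\sgn(\tilde g_t^{(i)})]$, and the same-sign part via a Lagrange mean-value expansion of $|x|^{1/(p-1)}$ with a two-case analysis on where the intermediate point $\zeta^{(i)}$ sits. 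You instead decompose around $\|\tilde g_t\|_{p^\ast}^{p^\ast}$, recover $\|g_t\|_{p^\ast}^{p^\ast}$ by coordinatewise Jensen (convexity of $|x|^{p^\ast}$), and then handle the cross term by recentering with $\E_t[\delta_i]=0$ and invoking the H\"older modulus $|h(a)-h(b)|\le 2^{1-1/(p-1)}|a-b|^{1/(p-1)}$ of the signed power map. This is cleaner in two respects: it subsumes the sign-flip and same-sign cases in a single inequality (so your closing worry about treating the sign-flip event separately is unnecessary in your own framework), and after peeling off $|\delta_i|^{1/(p-1)}\le(2G)^{1/(p-1)}$ it yields the constant $2\,G^{1/(p-1)}\|\vec\sigma\|_1/\sqrt{n_t}$, which is slightly sharper than the paper's $\tfrac{2p-1}{p-1}G^{1/(p-1)}\|\vec\sigma\|_1/\sqrt{n_t}$ and therefore still implies the theorem as stated. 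The trade-off is that your route leans on the global H\"older constant of $h$ and on Jensen, whereas the paper's case split keeps the dependence on the sign-flip probability explicit (which is the quantity signSGD-style analyses traditionally track); both routes use Assumption~\ref{asm:bound-grad} in the same essential way, to convert the leftover fractional power of the noise into a power of $G$. Your treatment of the curvature term via $\|s(\tilde g_t)\|_p^2=\|\tilde g_t\|_{p^\ast}^{2/(p-1)}\le G^{2/(p-1)}$ and the final telescoping/parameter choice match the paper.
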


\textit{Proof Sketch.} Starting with Assumption \ref{asm:smooth-lp} and the descent step in Algorithm \ref{alg:ellp-descent},
    \begin{align*}
        f(\theta_{t+1}) &\leq f(\theta_t) - \underbrace{\eta\inner{g_t}{s(g_t)}}_{A} + \underbrace{\eta \inner{g_t}{s(g_t) - s(\tilde{g}_t) }}_{B} \\
        & \quad + \underbrace{\frac{L\eta^2}{2} \norm{s(\tilde{g}_t)}_p^2}_{C},
    \end{align*}
where $A = \eta\norm{g_t}_{p^\ast}^{p^\ast}$. In conventional first-order analysis, the inner product term $B$ is supposed to cancel out after taking expectation. In contrast, the closed-form stochastic $\ell_p$ descent update is coordinate-wise re-scaled, which makes the descent step \emph{biased}, that is, $\E[s(\tilde{g})] \neq s(f(x))$. In the literature on biased gradient descent \citep{stich2020analysis, demidovich2023a}, the bias terms simply accumulate as constants and do not decay with the iterations. Thus, this term requires novel techniques to guarantee convergence. Noticing that $s_i(x) = \frac{x^{(i)}}{|x^{(i)}|^{\frac{p-2}{p-1}}} = \sgn(x^{(i)})|x^{(i)}|^{\frac{1}{p-1}}$,
\resizebox{0.48\textwidth}{!}{$
\begin{aligned}
        &B
        = \eta\sum_{i=1}^d g_t^{(i)} \left(\sgn\left(g_t^{(i)}\right)|g_t^{(i)}|^{\frac{1}{p-1}} - \sgn\left(\tilde{g}_t^{(i)}\right)|\tilde{g}_t^{(i)}|^{\frac{1}{p-1}} \right) \\ 
        &= \eta\sum_{i=1}^d \abs{g_t^{(i)}} \left(|g_t^{(i)}|^{\frac{1}{p-1}} + |\tilde{g}_t^{(i)}|^{\frac{1}{p-1}}\right) \mathbb{I}_{\bracks{\sgn\left(g_t^{(i)}\right) \neq \sgn\left(\tilde{g}_t^{(i)}\right)}}  \\
        &\quad + \eta\sum_{i=1}^d \abs{g_t^{(i)}} \left||g_t^{(i)}|^{\frac{1}{p-1}} - |\tilde{g}_t^{(i)}|^{\frac{1}{p-1}}\right| \mathbb{I}_{\bracks{\sgn\left(g_t^{(i)}\right) = \sgn\left(\tilde{g}_t^{(i)}\right)}}. \\
        &\quad
    \end{aligned} 
    $} 
    Denote the first term as $B_1$ and the second $B_2$. The $|g_t^{(i)}|^{\frac{1}{p-1}} + |\tilde{g}_t^{(i)}|^{\frac{1}{p-1}}$ term in $B_1$ can be bounded by $2G^\frac{1}{p-1}$ with Corollary \ref{cor:bound-grad}, after which we take expectation, turning the indicator into a probability, and Lemma \ref{lem:B1} in Appendix \ref{app:proof} shows $\E\bracks{B_1} \leq \frac{2\eta G^\frac{1}{p-1}\norm{\vec{\sigma}}_1}{\sqrt{n_t}}$ using Markov's inequality.
    
    $B_2$ requires more sophisticated handling since we cannot push the expectation through due to the data dependence of the term $\left||g_t^{(i)}|^{\frac{1}{p-1}} - |\tilde{g}_t^{(i)}|^{\frac{1}{p-1}}\right|$, nor does $\mathbb{P}\bracks{\sgn\left(g_t^{(i)}\right) = \sgn\left(\tilde{g}_t^{(i)}\right)}$ give us much information. We instead take the zeroth-order Taylor expansion so that $\forall \ i \in [d], \ \exists \ \zeta^{(i)}$ between $g_t^{(i)}$ and $\tilde{g}_t^{(i)}$ such that 
    \resizebox{0.48\textwidth}{!}{$\begin{aligned} &{\scriptscriptstyle\quad} \\
    &|g_t^{(i)}|^{\frac{1}{p-1}} = |\tilde{g}_t^{(i)}|^{\frac{1}{p-1}} + \frac{1}{p-1}\sgn(\zeta^{(i)})\abs{\zeta^{(i)}}^\frac{2-p}{p-1} \pars{g_t^{(i)} - \tilde{g}_t^{(i)}}. \\
    &{\scriptscriptstyle\quad}\end{aligned}$}
    In addition, we have \begin{align*}&\left||g_t^{(i)}|^{\frac{1}{p-1}} - |\tilde{g}_t^{(i)}|^{\frac{1}{p-1}}\right|\\&\qquad\qquad = \frac{1}{p-1}\sgn(\zeta^{(i)})\abs{\zeta^{(i)}}^\frac{2-p}{p-1} \pars{g_t^{(i)} - \tilde{g}_t^{(i)}}.
    \end{align*}
    Furthermore, given $\sgn\left(g_t^{(i)}\right) = \sgn\left(\tilde{g}_t^{(i)}\right)$, it is either $\abs{g_t^{(i)}} \leq \abs{\zeta^{(i)}} \leq \abs{\tilde{g}_t^{(i)}}$ or $\abs{g_t^{(i)}} \geq \abs{\zeta^{(i)}} \geq \abs{\tilde{g}_t^{(i)}}$. Appendix \ref{app:proof} Lemma \ref{lem:B2} shows that $\E\bracks{B_2} \leq \frac{\eta G^\frac{1}{p-1} \norm{\vec{\sigma}}_1}{(p-1)\sqrt{n_t}}$ in either case.

    Term $C$ is usually turned into mean-squared error that coincides with variance in an unbiased setting, which the bounded variance assumption can directly handle. This is not the case for our setting. It is worth noting that the analysis of signSGD \citep{bernstein2018signsgd}, a special case of the $\ell_p$ setting with $p = \infty$, was able to push through due to its update being in the very form of the sign of the gradient, which is in itself bounded by the constant $1$. Our update, in contrast, is more complicated with the absolute value of the coordinates of the gradient in the denominator, which is only lower bounded by $0$, or some $\epsilon > 0$ at best. Therefore, we directly apply Assumption \ref{asm:bound-grad} and $C = \frac{L\eta^2}{2} \norm{g_t}_{p^\ast}^{\frac{2}{p-1}} \leq \frac{L\eta^2G^\frac{2}{p-1}}{2}$. Moving term $A$ to the left hand side, telescoping across iterations, and dividing both sides by $\eta T$ completes the proof.
\hfill $\Box$

\subsection{$\ell_p$ acceleration}
We would note that for smooth convex optimization, (deterministic) gradient descent can be accelerated to achieve a rate of $O(1/T^2)$. However, for stochastic first-order methods, it has been shown that a) in convex settings, SGD cannot improve upon the standard $O(1/\sqrt{T})$ rate when noise parameter $\sigma$ is large enough \citep{agarwal2009information}, and b) in first-order smooth \emph{non-convex} settings, \emph{SGD cannot be accelerated (in theory)} without additional assumptions (in terms of gradient norm minimization), due to known lower bounds \citep{arjevani2023lower}. Nevertheless, standard practical implementations of SGD are frequently designed to introduce \emph{some} notion of acceleration with momentum (e.g.,~\citep{bernstein2018signsgd,sutskever2013importance}), ``pushing'' the converging sequence further along the direction of previous gradients.

In contrast, we take the view of acceleration not as a ``pushing'' (in the Euclidean sense), but rather as a (dynamic) interpolation of two iterate sequences: one acting from a (primal) steepest descent perspective (line 4 Algorithm \ref{alg:p2}), while the other functions in a dual capacity (line 5 Algorithm \ref{alg:p2}). An apparent distinction is that momentum, as a separate functionality, can be applied on top of the acceleration scheme in {\sc Stacey$_{(p, 2)}$}, as demonstrated in lines 3 and 7 of Algorithm \ref{alg:p2}, for both the steepest descent and the (Euclidean) mirror descent.

A Euclidean-based two-sequence interpolation was adopted by Schedule-Free SGD/AdamW~\citep{defazio2024road}, which removes explicit learning-rate schedules while retaining strong performance.
In the realm of non-Euclidean methods, we contrast our algorithm with Lion-$\mc{K}$ \citep{chen2024lion, bernstein2018signsgd}. While at first glance it may seem that these methods may simply be a rewriting of each other (based on the choice of parameters), a closer inspection on \emph{the very first step} reveals that such is not the case:
\begin{align*}
    \text{Lion-$\mathcal{K}$: } & \theta_1 = -\eta \nabla \mc{K}\pars{(1-\beta_1) \tilde{g}_0}, \\
    \text{\sc Stacey$_{(p, 2)}$: } & \theta_1 = - (1-\tau) \eta s^\epsilon\pars{(1-\beta_1)\tilde{g}_0} \\
     & \qquad - \tau \alpha (1-\beta_1) \tilde{g}_0.
\end{align*}
where $\mc{K}(\cdot) = \norm{\cdot}_{p^*}$ and $s^\epsilon\pars{\cdot}$ is defined in Algorithm \ref{alg:p2}. 
The key difference of {\sc Stacey$_{(p, 2)}$} lies in the convex combination of a steepest descent step and a gradient descent step, whereas Lion-$\mathcal{K}$ is composed of only the steepest descent step. They coincide only when $\tau = 0$ for {\sc Stacey$_{(p, 2)}$}, i.e., completely getting rid of the ``coupling'', which then defeats the purpose of our acceleration. In addition, there is no choice of parameters for Lion-$\mathcal{K}$ to recover linear coupling. As a result, they are not iterate-equivalent, which further highlights the fundamental difference between ``momentum'' and ``acceleration'', a distinction which, crucially, does not appear in the case of standard (Euclidean) AGD, i.e., when both steepest and mirror descent steps are with respect to Euclidean norms.

Further inspired by the fact that {\sc Stacey$_{(p, 2)}$} breaks the symmetry (in primal and dual trajectories) by coupling an $\ell_p$ steepest descent step with an $\ell_2$-based mirror descent step, we present the natural variant {\sc Stacey$_{(p, p)}$} (Algorithm \ref{alg:pp}),
for which we group $\ell_p$ steepest descent with a mirror descent step having $\frac{1}{p}\norm{\cdot}_p^p$ (whose $p^{th}$-order uniform convexity is useful for non-Euclidean acceleration~\citep{adil2024convex, contreras2024non, song2021unified}) as its distance generating function.
The closed-form mirror descent update is presented in line 5 of the algorithm.

\section{Experiments}\label{sec:experiments}

\begin{table*}[t]\small
\centering
\caption{Image classification on CIFAR at the 50th, 100th, and 200th epochs. {\sc Stacey} consistently outperforms other optimizers, demonstrating both superior accuracy and faster convergence.}
\label{tab:cifar}
\begin{tabular}{l|ccc|ccc}
\toprule
\multicolumn{1}{c|}{\multirow{2}{*}{\textbf{Optimizer}}} &
\multicolumn{3}{c|}{\textbf{Training NLL $\downarrow$}} &
\multicolumn{3}{c}{\textbf{Testing ACC (\%) $\uparrow$}} \\ \cline{2-7}
\multicolumn{1}{c|}{} & @50 epoch & @100 epoch & @200 epoch & @50 epoch & @100 epoch & @200 epoch \\ \midrule
SGD w/ Momentum        & 0.0567\std{0.0017} & 0.0441\std{0.0014} & 0.0352\std{0.0012} & 91.15\std{0.30} & 92.02\std{0.24} & 92.76\std{0.13}\\
Adam
& \textbf{0.0401}\std{0.0017} & 0.0182\std{0.0017} & 0.0083\std{0.0010} & 91.69\std{0.18} & 92.13\std{0.16} & 92.66\std{0.36} \\
AdamW
& 0.0590\std{0.0010} & 0.0278\std{0.0009} & 0.0195\std{0.0015} & 90.59\std{0.37} & 91.47\std{0.42} & 92.12\std{0.07} \\
Lion~\citep{chen2024symbolic} & 0.1006\std{0.0571} & 0.2226\std{0.1410} & 0.0245\std{0.0043} & 89.38\std{2.02} & 89.19\std{1.88} & 92.15\std{0.32} \\ \hline
{\sc Stacey}$_{(p,p)}$ & 0.0423\std{0.0009} & \textbf{0.0118}\std{0.0014} & 0.0021\std{0.0011} & \textbf{91.88}\std{0.21} & \textbf{92.79}\std{0.16} & \textbf{93.79}\std{0.38} \\
{\sc Stacey}$_{(p,2)}$ & 0.0614\std{0.0031} & 0.0131\std{0.0027} & \textbf{0.0014}\std{0.0005} & 90.83\std{0.32} & 92.70\std{0.28} & 93.54\std{0.06} \\
\bottomrule
\end{tabular}
\end{table*}



\begin{table*}[t]\small
\centering
\caption{Image classification on ImageNet at the 20th, 40th, and 60th epochs. {\sc Stacey} demonstrates superior test accuracy and faster convergence compared to other optimizers.}
\label{tab:imagenet}
\begin{tabular}{l|ccc|ccc}
\toprule
\multicolumn{1}{c|}{\multirow{2}{*}{\textbf{Optimizer}}} & \multicolumn{3}{c|}{\textbf{Training NLL $\downarrow$}} & \multicolumn{3}{c}{\textbf{Testing Top-$1$ ACC (\%) $\uparrow$}} \\ \cline{2-7} 
\multicolumn{1}{c|}{}& @20 epoch  & @40 epoch & @60 epoch  & @20 epoch   & @40 epoch  & @60 epoch \\ \midrule
SGD w/ Momentum & 2.0731\std{0.0007} &  1.7926\std{0.0006} & 1.4993\std{0.0003} & 56.34\std{0.27}  & 63.54\std{0.09} & 68.81\std{0.54}  \\
AdamW
& \textbf{1.3337}\std{0.0008} &  \textbf{0.9822}\std{0.0017} & \textbf{0.7395}\std{0.0029} & 66.12\std{0.53} & 68.47\std{0.14} & 69.31\std{0.05}              \\
Lion~\citep{chen2024symbolic} & 1.3529\std{0.0007}            & 1.0948\std{0.0126}          &  0.8605\std{0.0045}          & \textbf{67.66}\std{0.03}                &  68.43\std{0.10}           &  69.62\std{0.11}             \\ \hline
{\sc Stacey}$_{(p,p)}$ & 1.4680\std{0.0150}  & 1.1636\std{0.0159} & 1.0324\std{0.0100}  & 66.93\std{0.10} & \textbf{69.15}\std{0.15} & \textbf{69.87}\std{0.14} \\
{\sc Stacey}$_{(p,2)}$  & 1.8376\std{0.0134}  & 1.3781\std{0.0187} & 1.1983\std{0.0120} & 60.89\std{0.12}   & 66.34\std{0.16}  & 67.56\std{0.15}     \\ \bottomrule
\end{tabular}
\end{table*}

\begin{table*}[t]
\scriptsize\setlength\tabcolsep{2.9pt}
\centering
\caption{Training and testing loss of LLM pre-training at a series of steps. The proposed {\sc Stacey} optimizer consistently achieves lower loss than baselines at all steps.}
\label{tab:llm}
\begin{tabular}{l|cccc|cccc}
\toprule
\multicolumn{1}{c|}{\multirow{2}{*}{\textbf{Optimizer}}} & \multicolumn{4}{c|}{\textbf{Training Loss}}                                                               & \multicolumn{4}{c}{\textbf{Testing Loss}}                                                                 \\ \cline{2-9} 
\multicolumn{1}{c|}{}                                    & @5k step                 & @10k steps               & @20k steps               & @30k steps               & @5k step                 & @10k steps               & @20k steps               & @30k steps               \\ \midrule
SGD w/ Momentum                                          & 6.6704\std{0.0129}       & 6.5205\std{0.0088}       & 6.4206\std{0.0055}       & 6.3920\std{0.0048}       & 6.6558\std{0.0131}       & 6.5150\std{0.0085}       & 6.4173\std{0.0038}       & 6.3909\std{0.0038}       \\
Adam                                                     & 6.4548\std{0.0028}       & 6.3647\std{0.0037}       & 6.2851\std{0.0030}       & 6.2485\std{0.0028}       & 6.4493\std{0.0017}       & 6.3646\std{0.0035}       & 6.2820\std{0.0037}       & 6.2480\std{0.0028}       \\
AdamW                                                    & 5.6655\std{0.0095}       & 5.5172\std{0.0081}       & 5.4401\std{0.0091}       & 5.4268\std{0.0096}       & 5.6510\std{0.0099}       & 5.5171\std{0.0080}       & 5.4350\std{0.0088}       & 5.4240\std{0.0093}       \\
Lion~\citep{chen2024symbolic}                            & 6.8722\std{0.0656}       & 6.8190\std{0.0549}       & 6.8021\std{0.0451}       & 6.7794\std{0.0425}       & 6.8624\std{0.0587}       & 6.8220\std{0.0500}       & 6.7954\std{0.0438}       & 6.7733\std{0.0413}       \\ \midrule
{\sc Stacey}$_{(p,p)}$                                   & \textbf{5.4016}\std{0.0107} & \textbf{4.9938}\std{0.0209} & \textbf{4.6492}\std{0.0112} & \textbf{4.4962}\std{0.0123} & \textbf{5.3616}\std{0.0068} & \textbf{4.9655}\std{0.0169} & \textbf{4.6372}\std{0.0116} & \textbf{4.4879}\std{0.0132} \\
{\sc Stacey}$_{(p,2)}$                                   & 6.2492\std{0.0060}       & 6.0038\std{0.0319}       & 5.7210\std{0.0363}       & 5.5841\std{0.0379}       & 6.2312\std{0.0065}       & 5.9867\std{0.0313}       & 5.7062\std{0.0375}       & 5.5755\std{0.0375}       \\ \bottomrule
\end{tabular}
\end{table*}

\begin{figure}[t]
    \centering
    \subfloat[Training loss of {\sc Stacey}$_{(p,p)}$]{\includegraphics[width=.49\columnwidth]{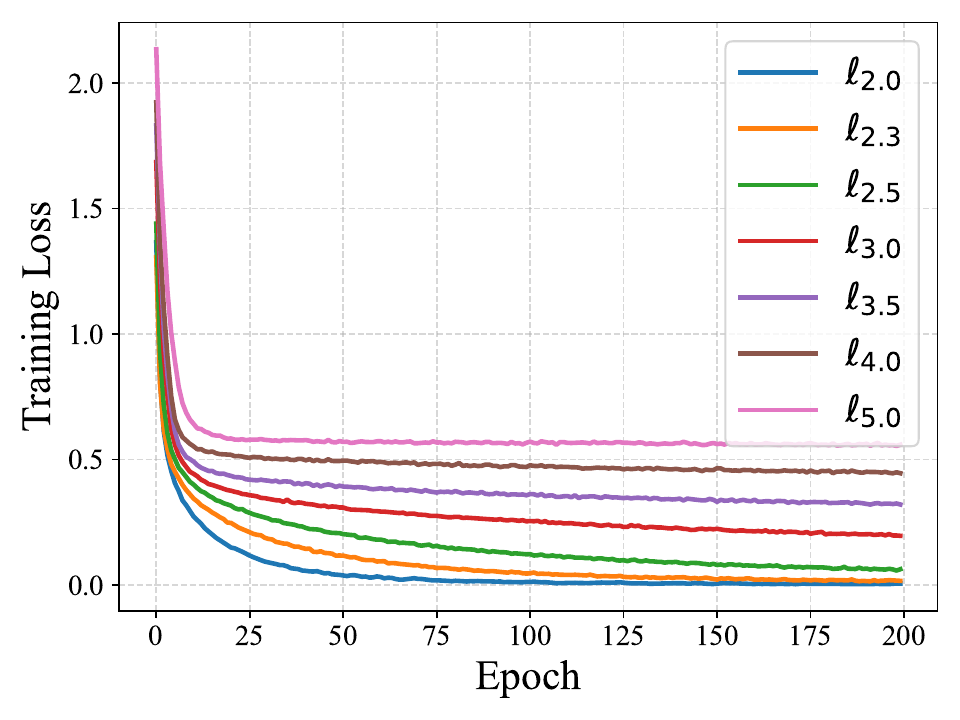}}\hspace{0pt}
    \subfloat[Testing ACC of {\sc Stacey}$_{(p,p)}$]{\includegraphics[width=.49\columnwidth]{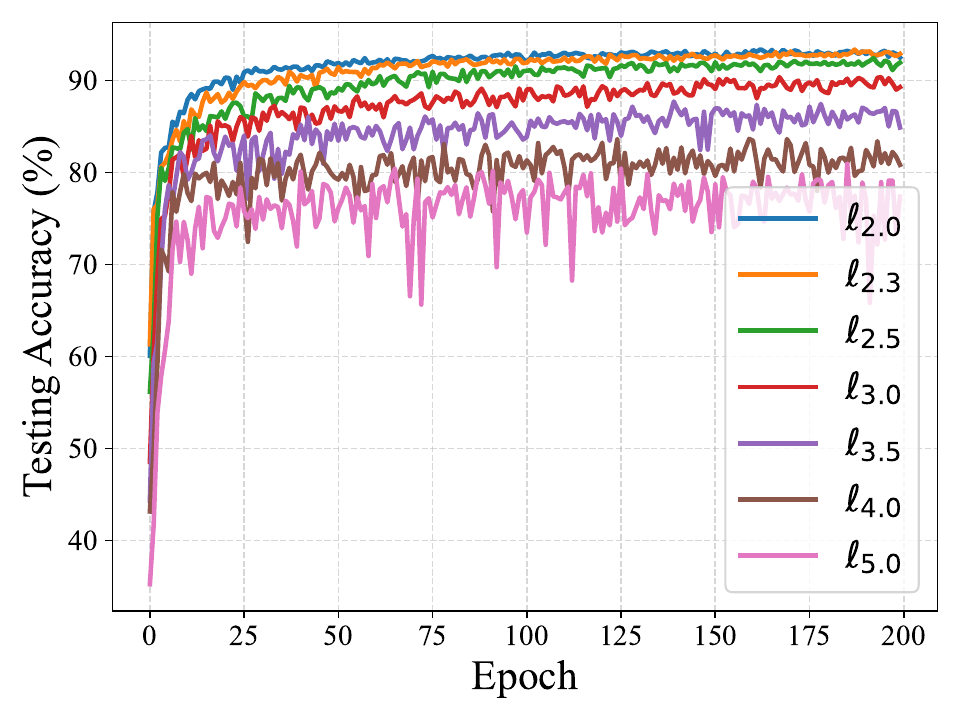}}\hspace{0pt}
    \subfloat[Training loss of {\sc Stacey}$_{(p,2)}$]{\includegraphics[width=.49\columnwidth]{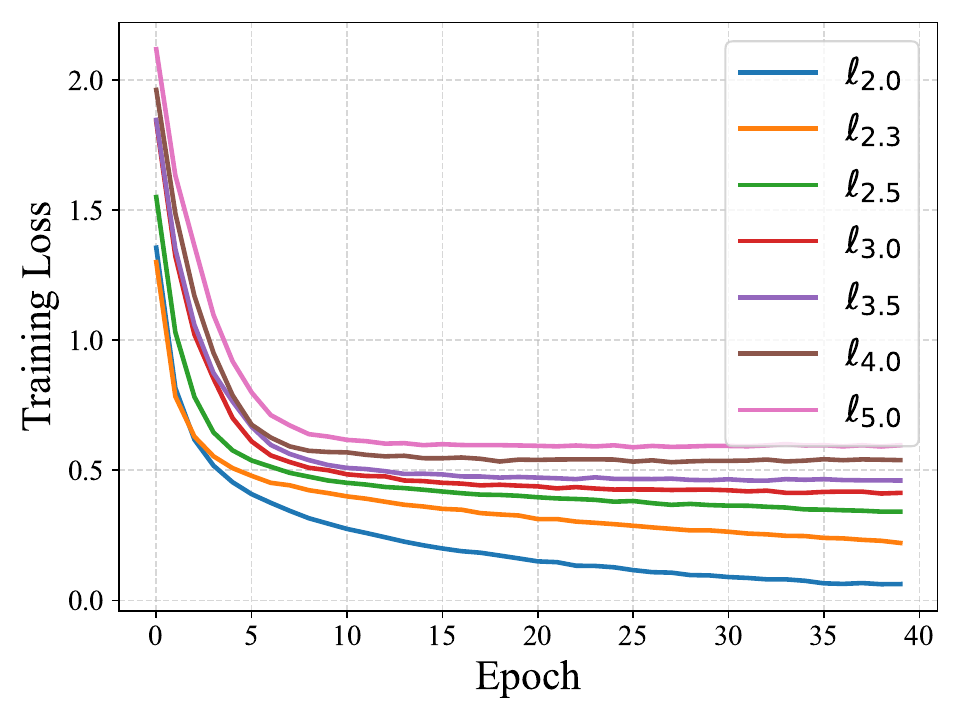}}\hspace{0pt}
    \subfloat[Testing ACC of {\sc Stacey}$_{(p,2)}$]{\includegraphics[width=.49\columnwidth]{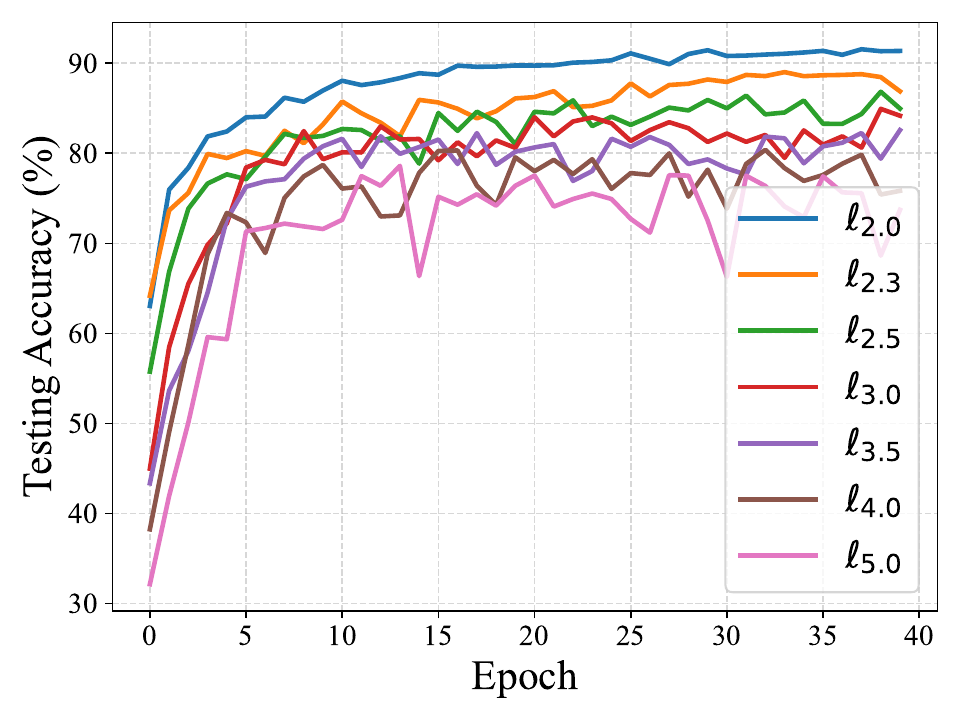}}\hspace{0pt}
    \caption{Learning curves of CIFAR classification with varying $\ell_p$-norm.}
    \label{fig:cifar_diff_p}
\end{figure}

\begin{figure}[t]
    \centering
    \subfloat[Training loss of {\sc Stacey}$_{(p,p)}$]{\includegraphics[width=.49\columnwidth]{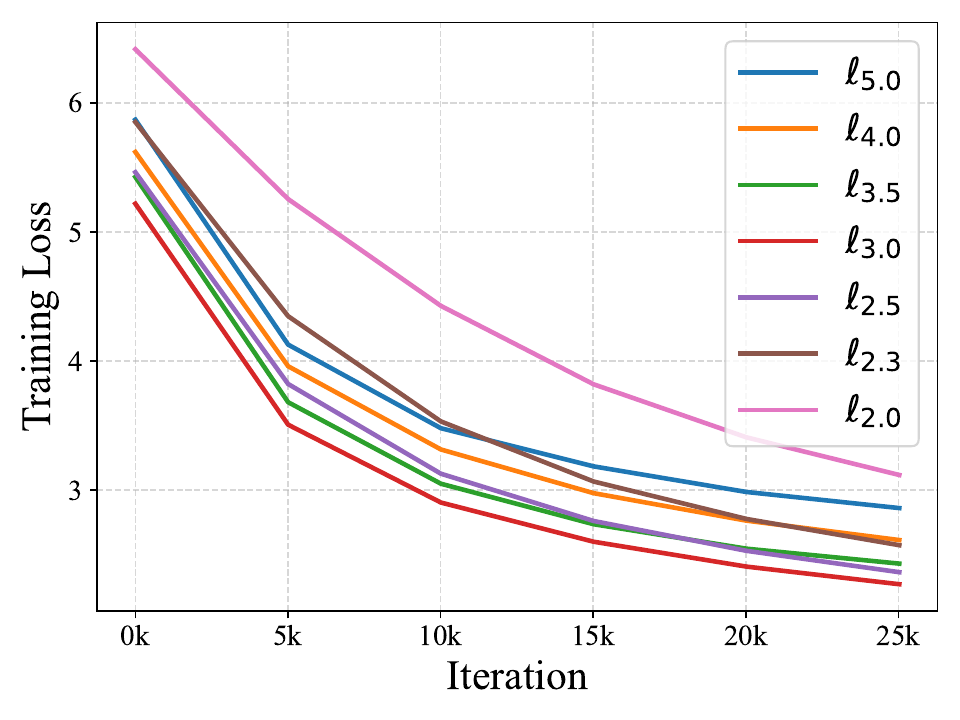}}\hspace{0pt}
    \subfloat[Testing ACC of {\sc Stacey}$_{(p,p)}$]{\includegraphics[width=.49\columnwidth]{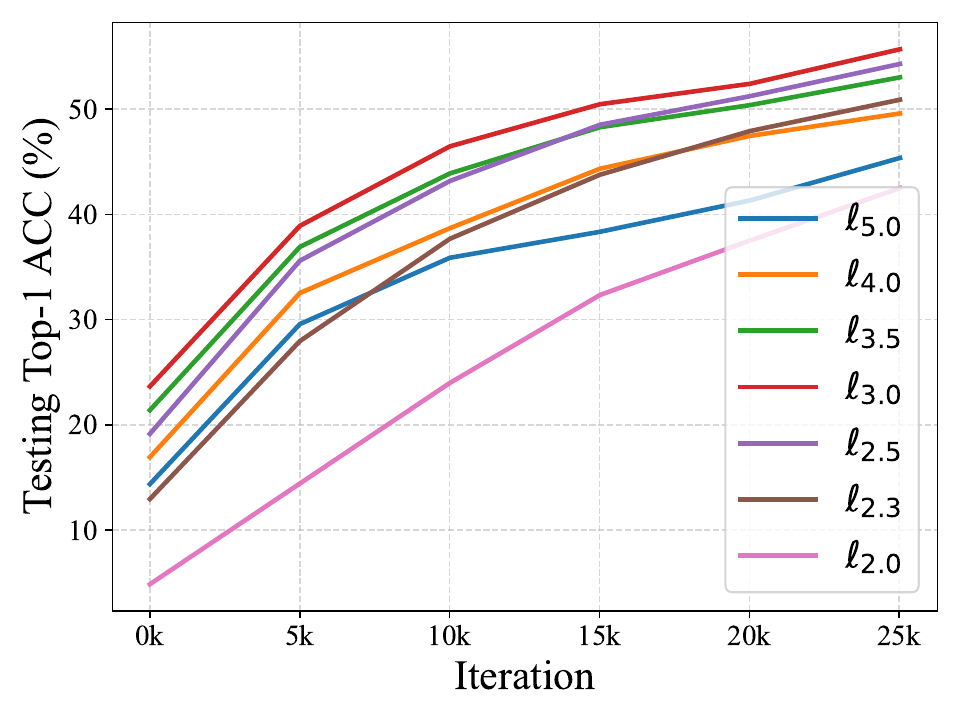}}\hspace{0pt}
    \subfloat[Training loss of {\sc Stacey}$_{(p,2)}$]{\includegraphics[width=.49\columnwidth]{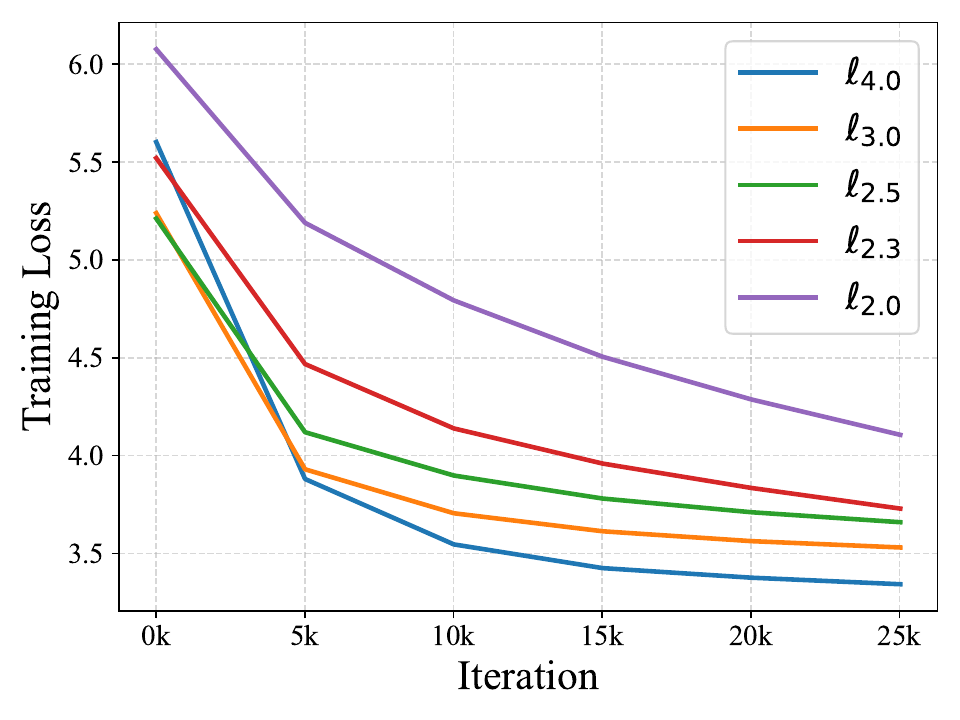}}\hspace{0pt}
    \subfloat[Testing ACC of {\sc Stacey}$_{(p,2)}$]{\includegraphics[width=.49\columnwidth]{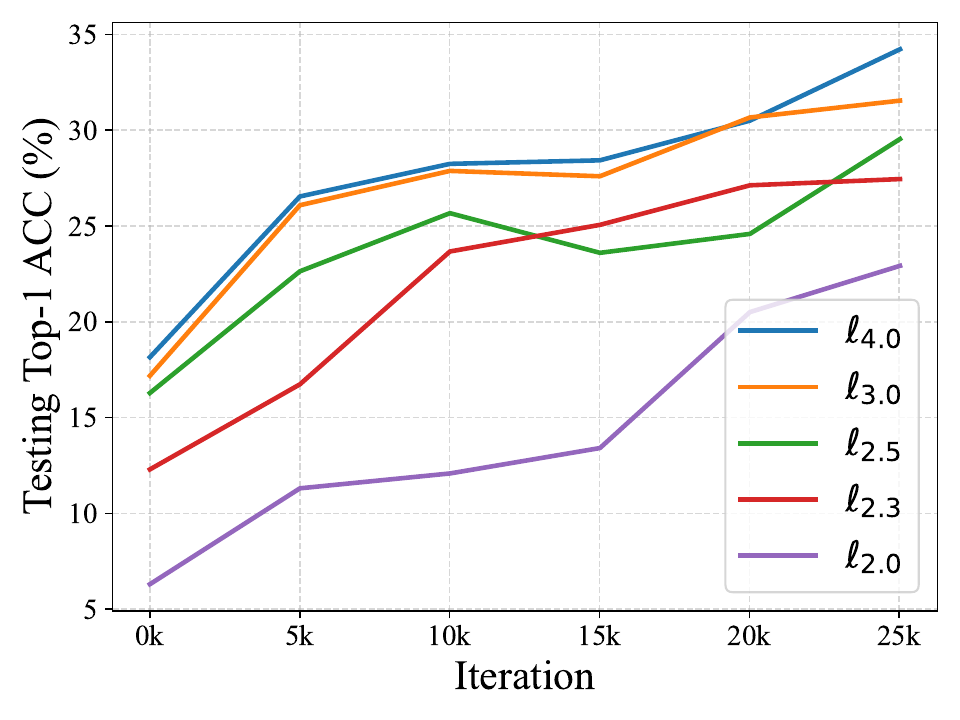}}
    \caption{Learning curves of ImageNet classification at the first 6 epochs with varying $\ell_p$-norm.}
    \label{fig:imagenet_diff_p}
\end{figure}


\begin{figure}[t]
    \centering
    \subfloat[Training loss of {\sc Stacey}$_{(p,p)}$]{\includegraphics[width=.49\columnwidth]{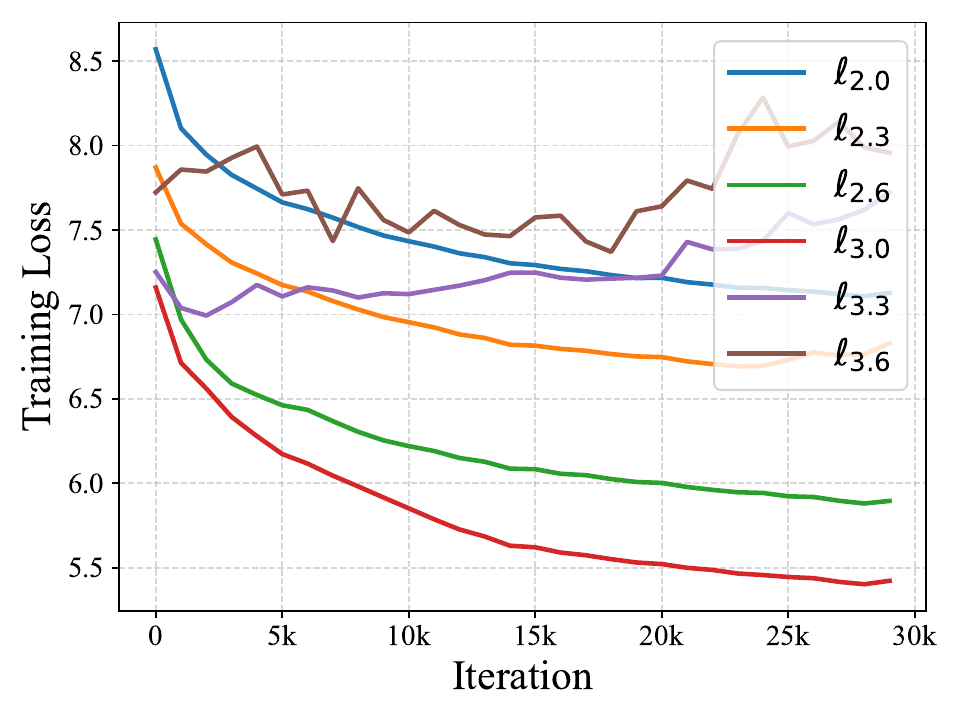}}\hspace{0pt}
    \subfloat[Testing loss of {\sc Stacey}$_{(p,p)}$]{\includegraphics[width=.49\columnwidth]{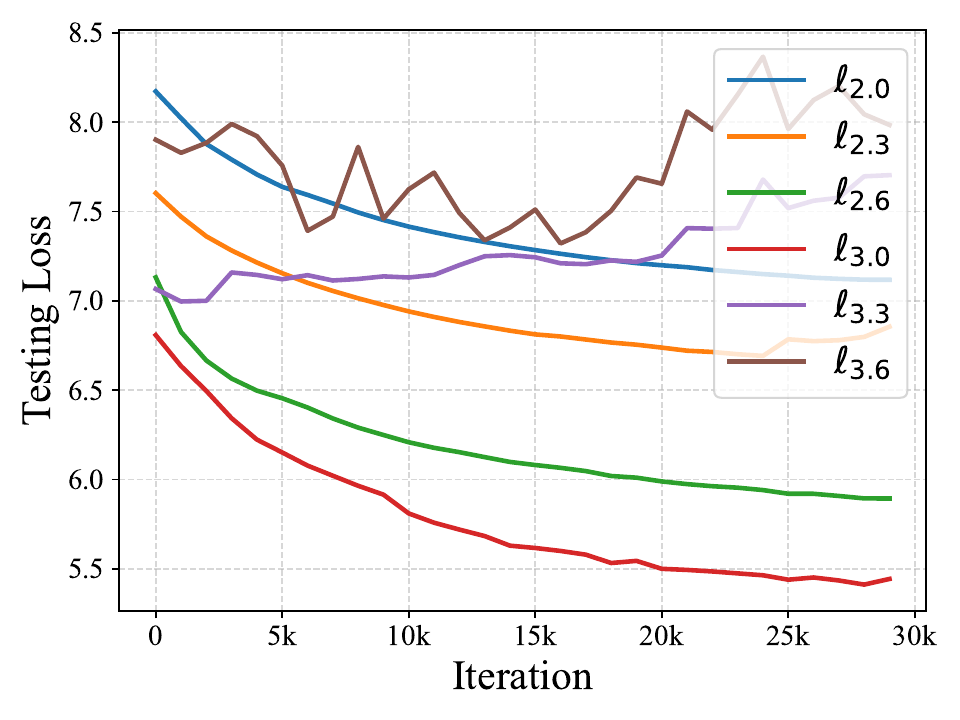}}\hspace{0pt}
    \subfloat[Training loss of {\sc Stacey}$_{(p,2)}$]{\includegraphics[width=.49\columnwidth]{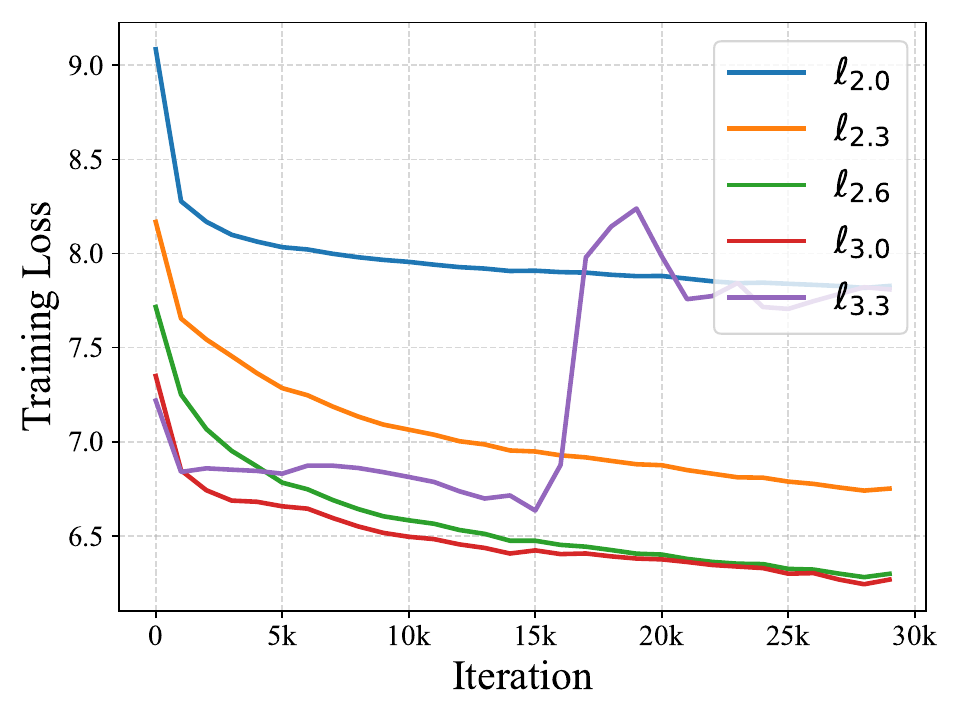}}\hspace{0pt}
    \subfloat[Testing loss of {\sc Stacey}$_{(p,2)}$]{\includegraphics[width=.49\columnwidth]{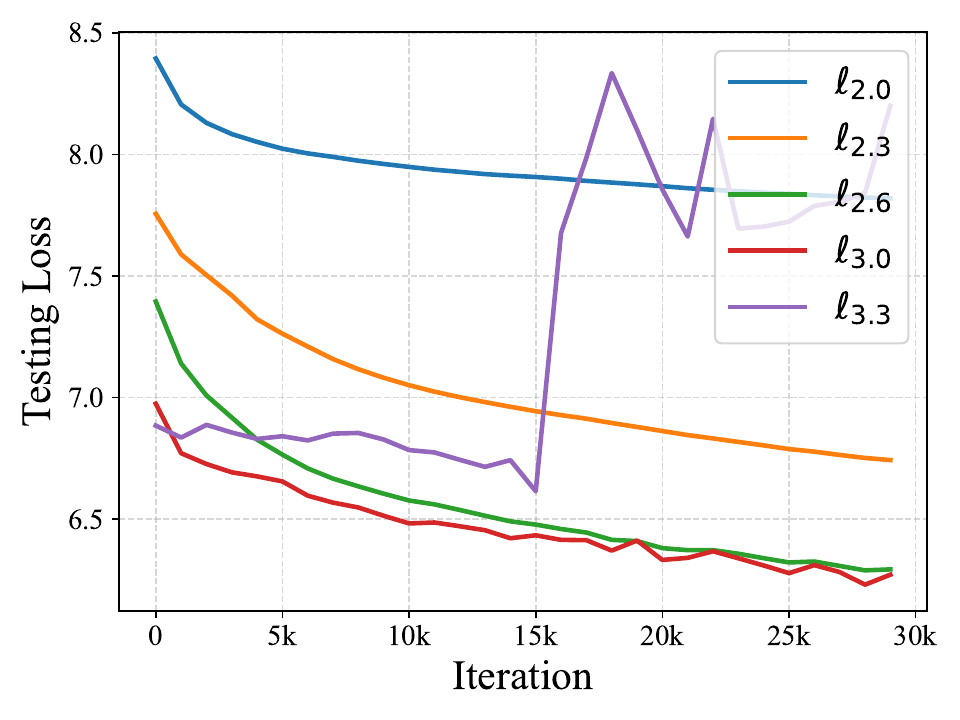}}
    \caption{Learning curves of LLM pretraining at the first 30K iterations with varying $\ell_p$-norm.}
    \label{fig:llm_diff_p}
\end{figure}

In this section, we present empirical evidence that the {\sc Stacey} optimizer outperforms other optimizers in both convergence speed and accuracy. We evaluate {\sc Stacey}’s effectiveness on image classification (Section~\ref{sec:img_classify}), and LLM pretraining (Section~\ref{sec:llm}). The hyperparameter choices and tuning are summarized in Appendix~\ref{appendix:hyper}.

In all experiments, we underscore the efficiency of the {\sc Stacey} optimizer by comparing it against other optimizers as baselines including SGD (with momentum)~\citep{nesterov1983method, polyak1964some}, Adam~\citep{kingma2014adam}, AdamW~\citep{loshchilovdecoupled}, and Lion~\citep{chen2024symbolic}.

In real-world large datasets, such as training from scratch on ImageNet~\citep{deng2009imagenet} and LLM (LLaMA~\citep{touvron2023llama}) pretraining on C4 dataset, we further demonstrate the necessity of utilizing different $\ell_p$-norms for specific tasks. For example, in the CIFAR~\citep{krizhevsky2009learning} image classification, an $\ell_p$-norm for $p$ close to $2$ delivers the best performance (Section~\ref{sec:img_classify}), consistent with the effectiveness of Euclidean-based optimizers. In contrast, an $\ell_p$-norm with $p$ around $3$ proves more effective in LLM pretraining (Section~\ref{sec:llm}). These results highlight the importance of developing non-Euclidean optimizers and adjusting the choice of $\ell_p$-norm to enhance performance across different tasks, and we would note this choice may further benefit from, e.g., parameter-free approaches~\cite{jacobsen2022parameter}.

\subsection{Image Classification}\label{sec:img_classify}
We demonstrate improved accuracy and faster convergence of the {\sc Stacey} optimizer across image classification tasks of varying scales, consistent with our algorithm’s design for acceleration. 

\paragraph{Training from scratch on CIFAR.} We train ResNet18~\citep{he2016deep} on the CIFAR dataset~\citep{krizhevsky2009learning} for 200 epochs, with the results presented in Table~\ref{tab:cifar}. We report training NLL and testing accuracy at the 50th, 100th, and 200th epochs. The proposed {\sc Stacey} optimizer consistently outperforms all compared optimizers. As shown in Fig.~\ref{fig:cifar_diff_p}, a $p$-norm of 2 yields the best performance for the CIFAR dataset when using the ResNet18 architecture.

\paragraph{Training from scratch on ImageNet.} We train ResNet50~\citep{he2016deep} with a batch size $256$ on ImageNet~\citep{deng2009imagenet} for 60 epochs.\footnote{Due to computational resource limitations, the batch sizes used in this paper are smaller than those in Lion's original paper~\citep{chen2024lion}.} The learning rate schedule is cosine decay with $10$K steps of warm-up, and the mix-precision training is used to reduce the memory footprint. The learning curves are shown in Table~\ref{tab:imagenet}.

\subsection{Pretraining Large Language Models (LLMs)}\label{sec:llm}
We pretrain \texttt{llama-100m}~\citep{touvron2023llama} on the C4 subset\footnote{\url{https://huggingface.co/datasets/datablations/c4-subsets}.} using various optimizers with cosine scheduler. The training and testing loss results, as presented in Table~\ref{tab:llm}, show the advantage of {\sc Stacey} over alternative algorithms. We additionally compare in Fig.~\ref{fig:llm_diff_p} the performance of {\sc Stacey} across different choices of $p$, whereby we observe the best performance when $p=3$, which contrasts with the best results being observed when $p=2$ in the CIFAR image classification tasks, as discussed in Section~\ref{sec:img_classify}.

\subsection{Discussion}
As we observe throughout the experiments, {\sc Stacey} demonstrates superior performance over SGD, which showcases its ability to adapt to a broader range of non-Euclidean geometries. This adaptability verifies {\sc Stacey}’s convergence for general $\ell_p$-norms, making it a better choice for optimization tasks that present complex geometries and extend beyond the conventional Euclidean frameworks.

Compared with Adam~\citep{kingma2014adam} and AdamW~\citep{loshchilovdecoupled}, the results of {\sc Stacey} suggests that the introduced acceleration technique is well-aligned with the principles of non-Euclidean optimization. In addition, they highlight how {\sc Stacey}’s acceleration mechanism, which is designed for a wider range of non-Euclidean structure, can yield better performance than traditional adaptive methods.

Furthermore, {\sc Stacey}’s improved performance over Lion~\citep{chen2024symbolic} highlights the effectiveness of interpolating primal and dual sequences as an acceleration strategy, in contrast to simply incorporating momentum. The primal-dual interpolation ensures a more balanced and stable progression towards optimality, leveraging information from both primal and dual sequences. This strategy allows {\sc Stacey} to achieve faster convergence, even in challenging settings and complex tasks like large-scale image classification and pretraining LLMs.\vspace{0.1cm}

\textbf{Algorithmic efficiency.} We observe that {\sc Stacey} has a $2d$ memory overhead, as it needs to store both a momemtum and a dual vector. This matches the memory overhead of Adam, which requires storing two moment vectors, and the per-iteration cost, in terms of basic arithmetic operations, is also comparable to that of Adam. Whereas methods such as SGD with momentum and Lion require only a single momentum vector, we would note that the overhead of the additional dual variable in {\sc Stacey} is precisely what enables its $\ell_p$-based acceleration.


\section{Conclusion}
In this paper, we have presented a new approach to stochastic non-convex optimization by leveraging \emph{non-Euclidean} $\ell_p$ geometry. We first established that \emph{stochastic $\ell_p$ steepest descent} converges at a rate of $O(\eps^{-4})$ in expectation to a stationary point under $\ell_p$-smoothness assumptions, thus strictly generalizing previous analyses for  signSGD ($p=\infty$). Building on these foundations, we introduced \textsc{Stacey}, an \emph{accelerated} algorithm that combines stochastic $\ell_p$ descent with primal-dual interpolation techniques to effectively navigate non-Euclidean optimization landscapes.

Our results highlight how acceleration in $\ell_p$ spaces can yield improved geometry-dependent performance compared to Euclidean and $\ell_\infty$-based updates. In extensive experiments on large-scale image classification and language modeling, \textsc{Stacey} consistently achieved faster convergence and higher accuracy than popular optimizers such as SGD, AdamW, and Lion. Moreover, we demonstrated the versatility of choosing different $p\in (2, \infty)$ to tailor the descent geometry to diverse model architectures and datasets. Overall, our contributions underscore both the theoretical and practical benefits of pursuing \emph{non-Euclidean} perspectives
for addressing the complexities of modern machine learning tasks.

\section*{Acknowledgements}
We thank Jincheng Zhou for helpful discussions related to the experiments implementation. Petros Drineas was partially supported by NSF AF 2209509 and NSF CDSE 2152687.

\section*{Impact Statement}
This paper presents work whose goal is to advance the field of Machine Learning. There are many potential societal consequences of our work, none of which we feel must be specifically highlighted here.





\bibliography{refs}
\bibliographystyle{icml2025}

\newpage
\appendix
\onecolumn
\section{Proofs}\label{appendix:proof}
\subsection{Complete Proof for Theorem~\ref{thm:main}} \label{app:proof}
\noindent\textbf{Theorem \ref{thm:main}} \textit{ Running Algorithm \ref{alg:ellp-descent} on some (possibly non-convex) function $f$ that satisfies Assumptions \ref{asm:smooth-lp} to \ref{asm:bound-grad} yields 
\begin{align*}
        \E\bracks{\frac{1}{T}\sum_{t=0}^{T-1} \norm{g_t}_{p^\ast}^{p^\ast}} \leq \frac{f_0 - f^\ast}{\eta T} + \frac{L\eta G^\frac{2}{p-1}}{2} + \frac{1}{T}\sum_{t=0}^{T-1}\frac{\frac{2p-1}{p-1}G^\frac{1}{p-1} \norm{\vec{\sigma}}_1}{\sqrt{n_t}} 
    \end{align*}   
where $f_0 = f(\theta_0)$ and $f^\ast = f(\theta^\ast)$, $n_t$ is the batch size in iteration $t$ and $L$, $\vec{\sigma}$, and $G$ are constants from Assumption \ref{asm:smooth-lp}, \ref{asm:coord-var}, \ref{asm:bound-grad}. Further letting the batch size $n_t = T$, the number of gradient call is $N=T^2$ for $T$ iterations. With $\eta = \frac{1}{L^\frac{1}{2}G^\frac{1}{p-1}T^\frac{1}{2}}$ we have 
\begin{align*}
        \E\bracks{\frac{1}{T}\sum_{t=0}^{T-1} \norm{g_t}_{p^\ast}^{p^\ast}} \leq \frac{1}{N^\frac{1}{4}}\bracks{L^\frac{1}{2}G^\frac{1}{p-1}\pars{f_0 - f^\ast + \frac{1}{2}} + \frac{2p-1}{p-1}G^\frac{1}{p-1} \norm{\vec{\sigma}}_1},
    \end{align*} 
   i.e., Algorithm \ref{alg:ellp-descent} takes $N \in \mc{O}\pars{\epsilon^{-4}}$ gradient queries to reach an $\epsilon$-approximate stationary point.
}
    \begin{proof}
     Starting with Assumption \ref{asm:smooth-lp} and the descent step in Algorithm \ref{alg:ellp-descent},
    \begin{align*}
        f(\theta_{t+1}) &\leq f(\theta_t) + \inner{g_t}{\theta_{t+1} - \theta_t} + \frac{L}{2} \norm{\theta_{t+1} - \theta_t}_p^2 \\
        &= f(\theta_t) + \eta\inner{g_t}{-s(\tilde{g}_t)} + \frac{L}{2} \norm{s(\tilde{g}_t)}_p^2 \\
        &= f(\theta_t) - \underbrace{\eta\inner{g_t}{s(g_t)}}_{A} + \underbrace{\eta \inner{g_t}{s(g_t) - s(\tilde{g}_t) }}_{B} + \underbrace{\frac{L\eta^2}{2} \norm{s(\tilde{g}_t)}_p^2}_{C}
    \end{align*}
Now we analyze these terms one by one.
    \begin{align*}
        A &= \sum_{i=1}^d g_t^{(i)} \cdot \frac{g_t^{(i)}}{|g_t^{(i)}|^{\frac{p-2}{p-1}}} \\
        &= \sum_{i=1}^d |g_t^{(i)}|^{\frac{p}{p-1}} \\
        &= \norm{g_t}_{p^\ast}^{p^\ast}
    \end{align*}
For term $B$,
    \begin{align*}
        B &= \eta\sum_{i=1}^d g_t^{(i)} \left(\frac{g_t^{(i)}}{|g_t^{(i)}|^{\frac{p-2}{p-1}}} - \frac{\tilde{g}_t^{(i)}}{|\tilde{g}_t^{(i)}|^{\frac{p-2}{p-1}}} \right) \\
        &= \eta\sum_{i=1}^d g_t^{(i)} \left(\sgn\left(g_t^{(i)}\right)|g_t^{(i)}|^{\frac{1}{p-1}} - \sgn\left(\tilde{g}_t^{(i)}\right)|\tilde{g}_t^{(i)}|^{\frac{1}{p-1}} \right) \\ 
        &\leq \eta\sum_{i=1}^d \abs{g_t^{(i)}} \left|\sgn\left(g_t^{(i)}\right)|g_t^{(i)}|^{\frac{1}{p-1}} - \sgn\left(\tilde{g}_t^{(i)}\right)|\tilde{g}_t^{(i)}|^{\frac{1}{p-1}}\right| \\ 
        &= \underbrace{\eta\sum_{i=1}^d \abs{g_t^{(i)}} \left(|g_t^{(i)}|^{\frac{1}{p-1}} + |\tilde{g}_t^{(i)}|^{\frac{1}{p-1}}\right) \mathbb{I}_{\bracks{\sgn\left(g_t^{(i)}\right) \neq \sgn\left(\tilde{g}_t^{(i)}\right)}}}_{B_1} \\
        & \qquad + \underbrace{\eta\sum_{i=1}^d \abs{g_t^{(i)}} \left||g_t^{(i)}|^{\frac{1}{p-1}} - |\tilde{g}_t^{(i)}|^{\frac{1}{p-1}}\right| \mathbb{I}_{\bracks{\sgn\left(g_t^{(i)}\right) = \sgn\left(\tilde{g}_t^{(i)}\right)}}}_{B_2} 
    \end{align*}
    $B_1$ is bounded in expectation by $\frac{ 2 \eta G^\frac{1}{p-1} \norm{\vec{\sigma}}_1}{\sqrt{n_t}}$ in Lemma \ref{lem:B1} and $B_2$ is bounded in expectation by $\frac{\eta G^\frac{1}{p-1} \norm{\vec{\sigma}}_1}{(p-1)\sqrt{n_t}}$ in Lemma \ref{lem:B2}.
    \begin{align*}
        C &= \frac{L\eta^2}{2} \left(\sum_{i=1}^d \left |\frac{g_t^{(i)}}{|g_t^{(i)}|^{\frac{p-2}{p-1}}} \right |^p\right)^\frac{2}{p} \\
        &= \frac{L\eta^2}{2} \left(\sum_{i=1}^d \left |g_t^{(i)}\right |^\frac{p}{p-1}\right)^\frac{2}{p} \\
        &= \frac{L\eta^2}{2} \norm{g_t}_{p^\ast}^{\frac{2}{p-1}} \\
        &\leq \frac{L\eta^2G^\frac{2}{p-1}}{2}
    \end{align*}
    Therefore,
    \begin{align*}
        \eta \E\bracks{\norm{g_t}_{p^\ast}^{p^\ast}} \leq f(\theta_t) - f(\theta_{t+1}) + \frac{\eta (2p-1)G^\frac{1}{p-1} \norm{\vec{\sigma}}_1}{(p-1)\sqrt{n_t}} + \frac{L\eta^2G^\frac{2}{p-1}}{2}
    \end{align*}

    By telescoping through $t = 0, \cdots, T-1$, we get
    \begin{align*}
        \E\bracks{\frac{1}{T}\sum_{t=0}^{T-1} \norm{g_t}_{p^\ast}^{p^\ast}} \leq \frac{f(\theta_0) - f(\theta_T)}{\eta T} + \frac{1}{T}\sum_{t=0}^{T-1}\frac{(2p-1)G^\frac{1}{p-1} \norm{\vec{\sigma}}_1}{(p-1)\sqrt{n_t}} + \frac{L\eta G^\frac{2}{p-1}}{2}
    \end{align*}
\end{proof}

\begin{lemma} \label{lem:B1}
    $$\E\bracks{\eta\sum_{i=1}^d \abs{g_t^{(i)}} \left(|g_t^{(i)}|^{\frac{1}{p-1}} + |\tilde{g}_t^{(i)}|^{\frac{1}{p-1}}\right)\mathbb{I}_{\bracks{\sgn\left(g_t^{(i)}\right) \neq \sgn\left(\tilde{g}_t^{(i)}\right)}}} \leq \frac{ 2 \eta G^\frac{1}{p-1} \norm{\vec{\sigma}}_1}{\sqrt{n_t}}$$
\end{lemma}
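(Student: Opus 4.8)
\textbf{Proof proposal for Lemma~\ref{lem:B1}.}
The plan is to first pull out the uniform bound on the $\ell_p^*$-like factor, then reduce the problem to controlling the sign-disagreement probability coordinatewise, and finally apply Markov's inequality together with the bounded-variance assumption. First I would use Corollary~\ref{cor:bound-grad}(b), which gives $|g_t^{(i)}| \leq G$ and $|\tilde g_t^{(i)}| \leq G$ for every coordinate $i$; hence $|g_t^{(i)}|^{1/(p-1)} + |\tilde g_t^{(i)}|^{1/(p-1)} \leq 2 G^{1/(p-1)}$. Substituting this into the left-hand side and pulling the constant $2\eta G^{1/(p-1)}$ out of the sum, it remains to bound
\begin{align*}
\E\bracks{\sum_{i=1}^d \abs{g_t^{(i)}}\, \mathbb{I}_{\bracks{\sgn(g_t^{(i)}) \neq \sgn(\tilde g_t^{(i)})}}} = \sum_{i=1}^d \abs{g_t^{(i)}}\, \mathbb{P}\bracks{\sgn(g_t^{(i)}) \neq \sgn(\tilde g_t^{(i)})},
\end{align*}
where I used that $g_t$ is deterministic given $\theta_t$ so $|g_t^{(i)}|$ comes out of the expectation and the indicator's expectation is the stated probability. (Strictly, everything here is conditional on $\theta_t$; I would note the outer expectation is handled by the tower property, or just argue for a fixed $\theta_t$.)

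Next I would bound the sign-flip probability for each coordinate. A sign disagreement between $g_t^{(i)}$ and $\tilde g_t^{(i)}$ forces $|\tilde g_t^{(i)} - g_t^{(i)}| \geq |g_t^{(i)}|$, so by Markov's inequality (applied to the nonnegative random variable $|\tilde g_t^{(i)} - g_t^{(i)}|$, or its square) together with Assumption~\ref{asm:coord-var} in its mini-batch form (the second corollary, giving variance $\leq \sigma_i^2/n_t$),
\begin{align*}
\mathbb{P}\bracks{\sgn(g_t^{(i)}) \neq \sgn(\tilde g_t^{(i)})} \leq \mathbb{P}\bracks{\abs{\tilde g_t^{(i)} - g_t^{(i)}} \geq \abs{g_t^{(i)}}} \leq \frac{\E\bracks{\abs{\tilde g_t^{(i)} - g_t^{(i)}}}}{\abs{g_t^{(i)}}} \leq \frac{\sigma_i/\sqrt{n_t}}{\abs{g_t^{(i)}}},
\end{align*}
where the last step uses $\E|\tilde g_t^{(i)} - g_t^{(i)}| \leq \sqrt{\E|\tilde g_t^{(i)} - g_t^{(i)}|^2} \leq \sigma_i/\sqrt{n_t}$ by Jensen. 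Multiplying by $|g_t^{(i)}|$ cancels the denominator, giving $|g_t^{(i)}|\,\mathbb{P}[\cdots] \leq \sigma_i/\sqrt{n_t}$. Summing over $i$ yields $\sum_i \sigma_i/\sqrt{n_t} = \norm{\vec\sigma}_1/\sqrt{n_t}$, and restoring the prefactor $2\eta G^{1/(p-1)}$ gives exactly the claimed bound $\frac{2\eta G^{1/(p-1)}\norm{\vec\sigma}_1}{\sqrt{n_t}}$.

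The only subtlety — the step I would be most careful about — is the cancellation between the $|g_t^{(i)}|$ factor and the $1/|g_t^{(i)}|$ arising from Markov: one must handle the degenerate case $g_t^{(i)} = 0$ separately (there the term is trivially zero, since the factor $|g_t^{(i)}|$ vanishes), so the bound $|g_t^{(i)}|\cdot\mathbb{P}[\cdots]\le \sigma_i/\sqrt{n_t}$ holds in all cases. Everything else is routine: the uniform gradient bound is immediate from Assumption~\ref{asm:bound-grad}, and the variance-to-first-moment passage is a one-line Jensen argument. I expect no genuine obstacle here; this lemma is the ``easy half'' of bounding the bias term $B$, with the harder analysis reserved for $B_2$ in Lemma~\ref{lem:B2}.
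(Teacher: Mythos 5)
Your proposal is correct and follows essentially the same route as the paper's proof: uniform bound $2G^{1/(p-1)}$ via Corollary~\ref{cor:bound-grad}(b), reduction of the indicator to the sign-disagreement probability, the implication $\sgn(g_t^{(i)})\neq\sgn(\tilde g_t^{(i)}) \Rightarrow |\tilde g_t^{(i)}-g_t^{(i)}|\ge|g_t^{(i)}|$, then Markov, Jensen, and Assumption~\ref{asm:coord-var} in that order. Your explicit handling of the degenerate case $g_t^{(i)}=0$ (where the Markov step divides by $|g_t^{(i)}|$) is a small point of extra care that the paper omits but does not change the argument.
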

\begin{proof} By Corollary \ref{cor:bound-grad} (b),
    \begin{align*}
        \E&\bracks{\eta \sum_{i=1}^d \abs{g_t^{(i)}} \left(|g_t^{(i)}|^{\frac{1}{p-1}} + |\tilde{g}_t^{(i)}|^{\frac{1}{p-1}}\right) \mathbb{I}_{\bracks{\sgn\left(g_t^{(i)}\right) \neq \sgn\left(\tilde{g}_t^{(i)}\right)}}} \\
        &\leq 2\eta G^\frac{1}{p-1} \E\bracks{\sum_{i=1}^d \abs{g_t^{(i)}}\mathbb{I}_{\bracks{\sgn\left(g_t^{(i)}\right) \neq \sgn\left(\tilde{g}_t^{(i)}\right)}}} \\
        &= 2\eta G^\frac{1}{p-1} \sum_{i=1}^d \abs{g_t^{(i)}}\mathbb{P}\bracks{\sgn\left(g_t^{(i)}\right) \neq \sgn\left(\tilde{g}_t^{(i)}\right)} \\
        &\leq 2\eta G^\frac{1}{p-1} \sum_{i=1}^d \abs{g_t^{(i)}}\mathbb{P}\bracks{\abs{\tilde{g}_t^{(i)} - g_t^{(i)}} \geq \abs{g_t^{(i)}} } \\
        &\leq 2\eta G^\frac{1}{p-1} \sum_{i=1}^d \abs{g_t^{(i)}}\frac{\mathbb{E}\bracks{\abs{\tilde{g}_t^{(i)} - g_t^{(i)}}}}{\abs{g_t^{(i)}}} \\
        &\leq 2\eta G^\frac{1}{p-1} \sum_{i=1}^d \sqrt{\mathbb{E}\bracks{\abs{\tilde{g}_t^{(i)} - g_t^{(i)}}^2}} \\
        &\leq \frac{2\eta G^\frac{1}{p-1} \sum_{i=1}^d \sigma_i}{\sqrt{n_t}}  \\
        &= \frac{2\eta G^\frac{1}{p-1} \norm{\vec{\sigma}}_1}{\sqrt{n_t}} 
    \end{align*}
    where for the last three inequalities we used Markov's inequality, Jensen's inequality, and Assumption \ref{asm:coord-var}.
\end{proof}

\begin{lemma} \label{lem:B2} 
    $\E\bracks{\eta\sum_{i=1}^d \abs{g_t^{(i)}} \left||g_t^{(i)}|^{\frac{1}{p-1}} - |\tilde{g}_t^{(i)}|^{\frac{1}{p-1}}\right| \mathbb{I}_{\bracks{\sgn\left(g_t^{(i)}\right) = \sgn\left(\tilde{g}_t^{(i)}\right)}}} \leq \frac{\eta G^\frac{1}{p-1} \norm{\vec{\sigma}}_1}{(p-1)\sqrt{n_t}}.$
\end{lemma}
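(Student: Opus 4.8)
\textbf{Proof proposal for Lemma~\ref{lem:B2}.}

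The plan is to mirror the structure of the proof of Lemma~\ref{lem:B1}, but since we can no longer simply bound $\big||g_t^{(i)}|^{1/(p-1)} - |\tilde g_t^{(i)}|^{1/(p-1)}\big|$ by a constant, we instead control it via the zeroth-order Taylor expansion of $u \mapsto |u|^{1/(p-1)}$, as sketched in the main text. First I would fix a coordinate $i$ and work on the event $\{\sgn(g_t^{(i)}) = \sgn(\tilde g_t^{(i)})\}$. On this event there exists $\zeta^{(i)}$ lying between $g_t^{(i)}$ and $\tilde g_t^{(i)}$ (hence with the same sign, and with $|\zeta^{(i)}|$ between $|g_t^{(i)}|$ and $|\tilde g_t^{(i)}|$) such that
\begin{align*}
\left||g_t^{(i)}|^{\frac{1}{p-1}} - |\tilde g_t^{(i)}|^{\frac{1}{p-1}}\right| = \frac{1}{p-1}\,\abs{\zeta^{(i)}}^{\frac{2-p}{p-1}}\,\abs{g_t^{(i)} - \tilde g_t^{(i)}}.
\end{align*}

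Next I would eliminate the awkward factor $\abs{\zeta^{(i)}}^{(2-p)/(p-1)}$, whose exponent is negative for $p > 2$. Since $\abs{g_t^{(i)}} \le \abs{\zeta^{(i)}}$ or $\abs{\tilde g_t^{(i)}} \le \abs{\zeta^{(i)}}$ always fails to directly help (we need a lower bound on $|\zeta^{(i)}|$ paired with the $\abs{g_t^{(i)}}$ in front), the key observation is to case-split exactly as in the sketch: either $\abs{g_t^{(i)}} \le \abs{\zeta^{(i)}}$, in which case $\abs{g_t^{(i)}}\cdot\abs{\zeta^{(i)}}^{\frac{2-p}{p-1}} \le \abs{g_t^{(i)}}\cdot\abs{g_t^{(i)}}^{\frac{2-p}{p-1}} = \abs{g_t^{(i)}}^{\frac{1}{p-1}} \le G^{\frac{1}{p-1}}$ by Corollary~\ref{cor:bound-grad}; or $\abs{g_t^{(i)}} \ge \abs{\zeta^{(i)}} \ge \abs{\tilde g_t^{(i)}}$, in which case $\abs{g_t^{(i)}}\cdot\abs{\zeta^{(i)}}^{\frac{2-p}{p-1}} \le \abs{g_t^{(i)}}\cdot\abs{\tilde g_t^{(i)}}^{\frac{2-p}{p-1}}$ — this is the delicate branch, but here $|\zeta^{(i)}|^{(2-p)/(p-1)} \le |\tilde g_t^{(i)}|^{(2-p)/(p-1)}$ is fine since the exponent is negative and $|\zeta^{(i)}| \ge |\tilde g_t^{(i)}|$; one then needs the extra inequality $\abs{g_t^{(i)}}\cdot\abs{\tilde g_t^{(i)}}^{\frac{2-p}{p-1}} \le G^{\frac{1}{p-1}}$, which should follow because, on the sub-event $\abs{g_t^{(i)}} \ge \abs{\tilde g_t^{(i)}}$, we can replace $\abs{g_t^{(i)}}$ in the numerator-style product by a more careful bound — I would actually re-organize this branch by writing $\abs{g_t^{(i)}}\cdot\abs{\zeta^{(i)}}^{\frac{2-p}{p-1}} = \abs{g_t^{(i)}} / \abs{\zeta^{(i)}}^{\frac{p-2}{p-1}}$ and using $\abs{\zeta^{(i)}} \ge \abs{\tilde g_t^{(i)}}$ together with $\abs{\zeta^{(i)}} \le \abs{g_t^{(i)}}$ judiciously to land on $G^{1/(p-1)}$. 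In either case the upshot is the clean bound
\begin{align*}
\abs{g_t^{(i)}}\left||g_t^{(i)}|^{\frac{1}{p-1}} - |\tilde g_t^{(i)}|^{\frac{1}{p-1}}\right|\,\mathbb{I}_{[\sgn(g_t^{(i)}) = \sgn(\tilde g_t^{(i)})]} \le \frac{G^{\frac{1}{p-1}}}{p-1}\,\abs{g_t^{(i)} - \tilde g_t^{(i)}}.
\end{align*}

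Finally I would sum over $i$, multiply by $\eta$, take expectations, and apply Jensen's inequality followed by Assumption~\ref{asm:coord-var} (the mini-batch version, Corollary~2) coordinatewise:
\begin{align*}
\E\bracks{\abs{g_t^{(i)} - \tilde g_t^{(i)}}} \le \sqrt{\E\bracks{\abs{g_t^{(i)} - \tilde g_t^{(i)}}^2}} \le \frac{\sigma_i}{\sqrt{n_t}},
\end{align*}
so that $\E[B_2] \le \tfrac{\eta G^{1/(p-1)}}{p-1}\sum_{i=1}^d \sigma_i/\sqrt{n_t} = \tfrac{\eta G^{1/(p-1)}\norm{\vec\sigma}_1}{(p-1)\sqrt{n_t}}$, as claimed. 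I expect the main obstacle to be the second branch of the case analysis: making the elimination of $\abs{\zeta^{(i)}}^{(2-p)/(p-1)}$ fully rigorous when $|\zeta^{(i)}|$ sits between the true and stochastic gradient magnitudes and the exponent is negative — one has to be careful which of $\abs{g_t^{(i)}}$, $\abs{\tilde g_t^{(i)}}$ to bound $|\zeta^{(i)}|$ against so that the product with the leading $\abs{g_t^{(i)}}$ still collapses to $G^{1/(p-1)}$ rather than blowing up. Everything after that reduction is the same routine Jensen-plus-bounded-variance argument used in Lemma~\ref{lem:B1}.
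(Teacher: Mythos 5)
Your first branch ($\abs{g_t^{(i)}} \le \abs{\zeta^{(i)}} \le \abs{\tilde g_t^{(i)}}$) is fine and matches the paper. The genuine gap is in the second branch: the ``clean'' pointwise bound you want there, namely $\abs{g_t^{(i)}}\,\abs{\zeta^{(i)}}^{\frac{2-p}{p-1}} \le G^{\frac{1}{p-1}}$ when $\abs{g_t^{(i)}} \ge \abs{\zeta^{(i)}} \ge \abs{\tilde g_t^{(i)}}$, is false, and no ``judicious'' combination of $\abs{\zeta^{(i)}} \ge \abs{\tilde g_t^{(i)}}$ and $\abs{\zeta^{(i)}} \le \abs{g_t^{(i)}}$ will rescue it. Writing $a = \abs{g_t^{(i)}}$, $b = \abs{\tilde g_t^{(i)}}$, $q = \tfrac{1}{p-1}$, the mean-value point satisfies $q\zeta^{q-1} = \frac{a^q - b^q}{a-b}$, so as $b \to 0$ with $a = G$ one gets $a\,\zeta^{q-1} \to (p-1)\,G^{q}$, which exceeds $G^{q}$ by exactly the factor $p-1$ you are trying to keep. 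Concretely, with $p=3$, $G=1$, $a=1$, $b=0.01$, your claimed inequality $a\abs{a^q-b^q} \le \tfrac{G^q}{p-1}\abs{a-b}$ reads $0.9 \le 0.495$. The sharpest deterministic bound available in this branch is $a(a^q-b^q) \le a^{q}(a-b) \le G^{q}(a-b)$ (without the $\tfrac{1}{p-1}$), so a fully pointwise argument can only deliver $\E[B_2] \le \eta G^{\frac{1}{p-1}}\norm{\vec\sigma}_1/\sqrt{n_t}$, losing the $\tfrac{1}{p-1}$ factor and hence not the lemma as stated (it would still give the theorem's rate, but with a worse constant than $\tfrac{2p-1}{p-1}$).

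The paper does not attempt a deterministic elimination in this branch. After bounding $\abs{\zeta^{(i)}}^{\frac{2-p}{p-1}} \le \abs{\tilde g_t^{(i)}}^{\frac{2-p}{p-1}}$, it keeps the product $\abs{g_t^{(i)}}\,\abs{\tilde g_t^{(i)}}^{\frac{2-p}{p-1}}\abs{g_t^{(i)}-\tilde g_t^{(i)}}$ inside the expectation, applies Cauchy--Schwarz to separate $\E\bracks{\abs{g_t^{(i)}-\tilde g_t^{(i)}}^2} \le \sigma_i^2/n_t$ from $\E\bracks{\abs{g_t^{(i)}}^2\abs{\tilde g_t^{(i)}}^{\frac{2(2-p)}{p-1}}}$, and then uses Jensen's inequality together with $\E[\tilde g_t^{(i)}] = g_t^{(i)}$ to collapse the latter to $\abs{g_t^{(i)}}^{\frac{2}{p-1}} \le G^{\frac{2}{p-1}}$, recovering the $\tfrac{1}{p-1}$ constant. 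So your instinct that this branch is the delicate one is right, but the missing idea is that it must be handled \emph{in expectation} (Cauchy--Schwarz plus unbiasedness), not by a coordinatewise deterministic bound. (As a side remark, even the paper's treatment here leans on a Jensen step for the map $u \mapsto u^{\frac{2-p}{p-1}}$ whose convexity direction deserves scrutiny, so this branch is genuinely the crux of the lemma.)
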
 \vspace{-10pt}
\begin{proof}
Denoting $\E\bracks{\cdot \mid \sgn\left(g_t^{(i)}\right) = \sgn\left(\tilde{g}_t^{(i)}\right)}$ as $\E_{\mid =}\bracks{\cdot}$, and $\mathbb{P}\bracks{\sgn\left(g_t^{(i)}\right) = \sgn\left(\tilde{g}_t^{(i)}\right)}$ as $\mathbb{P}\bracks{=}$,
    \begin{align*}
    \E&\bracks{\eta\sum_{i=1}^d \abs{g_t^{(i)}} \left||g_t^{(i)}|^{\frac{1}{p-1}} - |\tilde{g}_t^{(i)}|^{\frac{1}{p-1}}\right| \mathbb{I}_{\bracks{\sgn\left(g_t^{(i)}\right) = \sgn\left(\tilde{g}_t^{(i)}\right)}}} \\
    &= \eta\E_{\mid =}\bracks{\sum_{i=1}^d \abs{g_t^{(i)}} \left||g_t^{(i)}|^{\frac{1}{p-1}} - |\tilde{g}_t^{(i)}|^{\frac{1}{p-1}}\right|} \mathbb{P}\bracks{=} \\
    &= \eta\E_{\mid =}\bracks{\sum_{i=1}^d \abs{g_t^{(i)}} \left||g_t^{(i)}|^{\frac{1}{p-1}} - |\tilde{g}_t^{(i)}|^{\frac{1}{p-1}}\right|}\mathbb{P}\bracks{=} \\
    &= \eta\E_{\mid =}\bracks{\sum_{i=1}^d \abs{g_t^{(i)}} \left|\pars{|\tilde{g}_t^{(i)}|^{\frac{1}{p-1}} + \frac{1}{p-1}\sgn(\zeta^{(i)})\abs{\zeta^{(i)}}^\frac{2-p}{p-1} \pars{g_t^{(i)} - \tilde{g}_t^{(i)}}} - |\tilde{g}_t^{(i)}|^{\frac{1}{p-1}}\right|}\mathbb{P}\bracks{=} \\
    &= \eta\E_{\mid =}\bracks{\sum_{i=1}^d \abs{g_t^{(i)}} \left|\frac{1}{p-1}\sgn(\zeta^{(i)})\abs{\zeta^{(i)}}^\frac{2-p}{p-1} \pars{g_t^{(i)} - \tilde{g}_t^{(i)}} \right|} \mathbb{P}\bracks{=}\\
    &= \frac{\eta}{p-1}\E_{\mid =}\bracks{\sum_{i=1}^d \abs{g_t^{(i)}} \abs{\zeta^{(i)}}^\frac{2-p}{p-1} \abs{g_t^{(i)} - \tilde{g}_t^{(i)}} }\mathbb{P}\bracks{=}, \vspace{-5pt}
\end{align*}
in which the second equality holds by taking the zeroth order Taylor expansion of $\abs{g_t^{(i)}}^{\frac{1}{p-1}}$ at $\tilde{g}_t^{(i)}$ with Lagrange remainder, and $\zeta^{(i)}$ is in the range from $g_t^{(i)}$ to $\tilde{g}_t^{(i)}$. Given $\sgn\left(g_t^{(i)}\right) = \sgn\left(\tilde{g}_t^{(i)}\right)$, by the definition of $\zeta^{(i)}$ in the Lagrange remainder, we must have either $\abs{g_t^{(i)}} \leq \abs{\zeta^{(i)}} \leq \abs{\tilde{g}_t^{(i)}}$ 
 or $\abs{g_t^{(i)}} \geq \abs{\zeta^{(i)}} \geq \abs{\tilde{g}_t^{(i)}}$. Now we analyze these two cases respectively. We write out the derivations separately for clarity and simplicity, alternatively one can merge these two cases with the law of total expectation.

 (1) If $\abs{g_t^{(i)}} \leq \abs{\zeta^{(i)}} \leq \abs{\tilde{g}_t^{(i)}}$, then \vspace{-5pt}
 
\resizebox{\textwidth}{!}{
$\begin{aligned}
    \frac{\eta}{p-1}\E_{\mid =}\bracks{\sum_{i=1}^d \abs{g_t^{(i)}} \abs{\zeta^{(i)}}^\frac{2-p}{p-1} \abs{g_t^{(i)} - \tilde{g}_t^{(i)}} }\mathbb{P}\bracks{=} &\leq \frac{\eta}{p-1}\E_{\mid =}\bracks{\sum_{i=1}^d \abs{\zeta^{(i)}} \abs{\zeta^{(i)}}^\frac{2-p}{p-1} \abs{g_t^{(i)} - \tilde{g}_t^{(i)}} }\mathbb{P}\bracks{=}\\
     &=\frac{\eta}{p-1}\E_{\mid =}\bracks{\sum_{i=1}^d  \abs{\zeta^{(i)}}^\frac{1}{p-1} \abs{g_t^{(i)} - \tilde{g}_t^{(i)}} } \mathbb{P}\bracks{=}\\
    &\leq \frac{\eta}{p-1}\E_{\mid =}\bracks{\sum_{i=1}^d  \abs{\tilde{g}_t^{(i)}}^\frac{1}{p-1} \abs{g_t^{(i)} - \tilde{g}_t^{(i)}} } \mathbb{P}\bracks{=}\\
    & \leq \frac{\eta G^\frac{1}{p-1}}{p-1}\sum_{i=1}^d \E_{\mid =}\bracks{ \abs{g_t^{(i)} - \tilde{g}_t^{(i)}} }\mathbb{P}\bracks{=} \\
    & = \frac{\eta G^\frac{1}{p-1}}{p-1}\sum_{i=1}^d\frac{\E\bracks{ \abs{g_t^{(i)} - \tilde{g}_t^{(i)}} }}{\mathbb{P}[=]} \mathbb{P}\bracks{=}\\
    &\leq \frac{\eta G^\frac{1}{p-1}}{p-1}\sum_{i=1}^d \sqrt{\E\bracks{ \abs{g_t^{(i)} - \tilde{g}_t^{(i)}}^2 } } & \text{(Jensen's)}\\
    &\leq \frac{\eta G^\frac{1}{p-1}}{p-1}\sum_{i=1}^d \frac{\sigma_i}{\sqrt{n_t}} & \text{(Assumption \ref{asm:coord-var})} \\
    &= \frac{\eta G^\frac{1}{p-1} \norm{\vec{\sigma}}_1}{(p-1)\sqrt{n_t}}
\end{aligned}$}

(2) If $\abs{g_t^{(i)}} \geq \abs{\zeta^{(i)}} \geq \abs{\tilde{g}_t^{(i)}}$, then
\begin{align*}
&\frac{\eta}{p-1}\E_{\mid =}\bracks{\sum_{i=1}^d \abs{g_t^{(i)}} \abs{\zeta^{(i)}}^\frac{2-p}{p-1} \abs{g_t^{(i)} - \tilde{g}_t^{(i)}} }\mathbb{P}\bracks{=} \\
&\leq \frac{\eta}{p-1}\E_{\mid =}\bracks{\sum_{i=1}^d \abs{g_t^{(i)}} \abs{\tilde{g}_t^{(i)}}^\frac{2-p}{p-1} \abs{g_t^{(i)} - \tilde{g}_t^{(i)}} }\mathbb{P}\bracks{=} \\
&\leq \frac{\eta}{(p-1)\mathbb{P}[=]}\E\bracks{\sum_{i=1}^d \abs{g_t^{(i)}} \abs{\tilde{g}_t^{(i)}}^\frac{2-p}{p-1} \abs{g_t^{(i)} - \tilde{g}_t^{(i)}} }\mathbb{P}\bracks{=} \\
    &\leq \frac{\eta}{p-1}\sum_{i=1}^d 
 \sqrt{\E\bracks{\abs{g_t^{(i)}}^2 \abs{\tilde{g}_t^{(i)}}^\frac{2(2-p)}{p-1}}\E\bracks{\abs{g_t^{(i)} - \tilde{g}_t^{(i)}}^2 }} & \text{(Cauchy-Schwarz)} \\
 &\leq \frac{\eta}{p-1}\sum_{i=1}^d 
 \sqrt{\abs{g_t^{(i)}}^2 \E\bracks{\abs{\tilde{g}_t^{(i)}}^\frac{2(2-p)}{p-1}}\frac{\sigma_i^2}{n_t}} & \text{(Assumption \ref{asm:coord-var})}\\
 &\leq \frac{\eta}{p-1}\sum_{i=1}^d 
 \sqrt{\abs{g_t^{(i)}}^2 \pars{\E\bracks{\abs{\tilde{g}_t^{(i)}}^2}}^\frac{2-p}{p-1}\frac{\sigma_i^2}{n_t}} & \text{(Jensen's)}\\
 &\leq \frac{\eta}{p-1}\sum_{i=1}^d 
 \sqrt{\abs{g_t^{(i)}}^2 \pars{\Var\bracks{\tilde{g}_t^{(i)}} + \pars{\E\bracks{\tilde{g}_t^{(i)}}}^2}^\frac{2-p}{p-1}\frac{\sigma_i^2}{n_t}} & \text{(Variance Definition)}\\
 &\leq \frac{\eta}{p-1}\sum_{i=1}^d 
 \sqrt{\abs{g_t^{(i)}}^2 \pars{\E\bracks{\tilde{g}_t^{(i)}}}^\frac{2(2-p)}{p-1}\frac{\sigma_i^2}{n_t}}\\
 &= \frac{\eta}{p-1}\sum_{i=1}^d 
 \abs{g_t^{(i)}}^\frac{1}{p-1} \frac{\sigma_i}{\sqrt{n_t}} & \text{(Assumption \ref{asm:unbiased-grad})}\\
 &\leq \frac{\eta G^\frac{1}{p-1}\norm{\vec{\sigma}}_1}{(p-1)\sqrt{n_t}}.
\end{align*}

Combining these two cases together (e.g., by the law of total expectation) completes the proof.
\end{proof}

\section{\texorpdfstring{$\ell_2$}{l2} Majorization and \texorpdfstring{$\ell_p$}{lp} Smoothness}\label{app:l2maj}
An assumption of interest, studied by~\citet{bernstein2018signsgd} (as well as~\citet{karimi2016linear}), is that of \emph{$\ell_2$ majorization} (with respect to $\vec{L} = [L_1, \dots, L_d]$), meaning that for all $x, y \in \R^d$, 
\begin{equation*}
    \abs{f(y) - f(x) - \grad f(x)^\top (y-x)} \leq \frac{1}{2}\sum\limits_{i=1}^d L_i(y^{(i)} - x^{(i)})^2.
\end{equation*}

We may equivalently express this condition as 1-smoothness w.r.t. $\norm{\cdot}_{\mathbf{L}}$, where $\mathbf{L} := \diag(\vec{L})$, i.e., for all $x, y \in \R^d$, $\norm{\grad f(y) - \grad f(x)}_{\mathbf{L}^{-1}} \leq \norm{y-x}_{\mathbf{L}}$.

Interestingly, we may observe that, for any $1 < \rho \leq \infty$ and letting $\rho^* := \frac{\rho}{\rho-1}$, we have
\begin{equation*}
     \frac{1}{\norm{\vec{L}}_{\rho^*}^{1/2}}\norm{\grad f(y) - \grad f(x)}_{2\rho/(2\rho-1)} \leq \norm{\grad f(y) - \grad f(x)}_{\mathbf{L}^{-1}} \leq \norm{y-x}_{\mathbf{L}} \leq \norm{\vec{L}}_{\rho^*}^{1/2}\norm{y-x}_{2\rho},
\end{equation*}
where the first inequality holds by reverse Hölder's inequality, i.e., for $u, v \in \R^d$, $\sum\limits_{i=1}^d |u^{(i)}v^{(i)}| \geq \norm{u}_{1/q}\norm{v}_{\frac{-1}{q-1}}$ (where we choose $q = \frac{2\rho-1}{\rho}$), and the last inequality holds by Hölder's inequality.

Rearranging, we have
    $\norm{\grad f(y) - \grad f(x)}_{2\rho/(2\rho-1)} \leq \norm{\vec{L}}_{\rho^*}\norm{y-x}_{2\rho}$,
and so it follows that, for $p > 2$, $\ell_2$ majorization implies $\norm{\vec{L}}_{\frac{p}{p-2}}$-smoothness w.r.t. $\norm{\cdot}_p$. Thus, while this condition is sufficient to entail $\ell_p$ smoothness (as previously noted by~\cite{balles2020geometry} in the case of $p = \infty$), we nevertheless prefer to work directly with $\ell_p$ smoothness assumptions, as we believe they provide a more natural pairing for the methods we consider.


\section{Hyperparameter Choices}\label{appendix:hyper}
We summarize the hyperparameters used in our experiments in Tables~\ref{tab:cifar_hyper_parameter}, \ref{tab:imagenet_hyper_parameter}, and \ref{tab:llm_hyper_parameter}. These hyperparameters are determined through a grid search. Specifically, we perform a search to identify appropriate values for the $\ell_p$-norm, learning rate $\eta$, $\alpha$, and weight decay $\lambda$. This process involves an initial rough comparison across a range of magnitudes, followed by a more precise grid search to determine the optimal values.

For fair comparison, all experimental settings, apart from the listed hyperparameters, follow the original papers of AdamW~\citep{loshchilovdecoupled} and Lion~\citep{chen2024symbolic}, and are kept consistent across all optimizers. For example, data augmentations for ImageNet~\citep{deng2009imagenet} and CIFAR~\citep{krizhevsky2009learning} all include random cropping and random horizontal flipping.

\begin{table}[ht]\scriptsize
\centering
\caption{CIFAR hyper-parameters.}
\label{tab:cifar_hyper_parameter}
\begin{tabular}{clcccccccccc}
\toprule
\textbf{Model} & \textbf{Optimizer}                & \textbf{Batch Size} & \textbf{$p$} & \textbf{Learning Rate} & \textbf{Schedule} & \textbf{$\alpha$} & \textbf{$\beta_1$} & \textbf{$\beta_2$} & \textbf{$\lambda$} & \textbf{$\tau$} & \textbf{$\epsilon$} \\ \midrule
ResNet-18      & SGD w/ Momentum                       & 128                 & -            & 0.02                   & cosine decay      & -                 & 0.9                & -                  & 0.0002             & -               & -                   \\
ResNet-18      & Adam~\citep{kingma2014adam}       & 128                 & -            & 0.001                  & cosine decay      & -                 & 0.9                & 0.999              & 0.0005             & -               & 1e-8                \\
ResNet-18      & AdamW~\citep{loshchilovdecoupled} & 128                 & -            & 0.01                   & cosine decay      & -                 & 0.9                & 0.999              & 0.0005             & -               & 1e-8                \\
ResNet-18      & Lion~\citep{chen2024symbolic}     & 128                 & -            & 0.001                  & cosine decay      & -                 & 0.9                & 0.99               & 0.01               & -               & -                   \\ \hline
ResNet-18      & {\sc Stacey}$_{(p,p)}$            & 128                 & 2            & 0.1                    & cosine decay      & 0.1               & 0.9                & 0.99               & 0.01               & 0.001           & 1e-12               \\
ResNet-18      & {\sc Stacey}$_{(p,2)}$            & 128                 & 2            & 0.1                    & cosine decay      & 0.1               & 0.9                & 0.99               & 0.01               & 0.001           & 1e-12               \\ \bottomrule
\end{tabular}
\end{table}

\begin{table}[ht]\scriptsize
\centering
\caption{ImageNet hyper-parameters.}
\label{tab:imagenet_hyper_parameter}
\begin{tabular}{clcccccccccc}
\toprule
\textbf{Model}                & \textbf{Optimizer}                & \textbf{Batch Size} & \textbf{$p$} & \textbf{Learning Rate} & \textbf{Schedule} & \textbf{$\alpha$} & \textbf{$\beta_1$} & \textbf{$\beta_2$} & \textbf{$\lambda$} & \textbf{$\tau$} & \textbf{$\epsilon$} \\ \midrule
ResNet-50                     & SGD w/ Momentum                       & 256                 & -            & 0.01                   & cosine decay      & -                 & 0.9                & -                  & 0.0005             & -               & -                   \\
\multicolumn{1}{l}{ResNet-50} & AdamW~\citep{loshchilovdecoupled} & 256                 & -            & 0.002                  & cosine decay      & -                 & 0.9                & 0.999              & 0.005              & -               & 1e-4                \\
\multicolumn{1}{l}{ResNet-50} & Lion~\citep{chen2024symbolic}     & 256                 & -            & 3e-4                   & cosine decay      & -                 & 0.9                & 0.99               & 0.01               & -               & -                   \\ \hline
ResNet-50                     & {\sc Stacey}$_{(p,p)}$            & 256                 & 3            & 0.01                   & cosine decay      & 0.001               & 0.9                & 0.999              & 0.001             & 0.001           & 1e-8                \\
ResNet-50                     & {\sc Stacey}$_{(p,2)}$            & 256                 & 2.8          & 0.01                   & cosine decay      & 0.001              & 0.9                & 0.999              & 0.001             & 0.001           & 1e-8                \\ \bottomrule
\end{tabular}
\end{table}

\begin{table}[ht]\scriptsize
\centering
\caption{Hyper-parameters for LLM pretraining.}
\label{tab:llm_hyper_parameter}
\begin{tabular}{clccclcccccc}
\hline
\textbf{Model}      & \textbf{Optimizer}                & \textbf{Batch Size} & \textbf{$p$} & \textbf{Learning Rate} & \multicolumn{1}{c}{\textbf{Schedule}} & \textbf{$\alpha$} & \textbf{$\beta_1$} & \textbf{$\beta_2$} & \textbf{$\lambda$} & \textbf{$\tau$} & \textbf{$\epsilon$} \\ \hline
\texttt{llama-100m} & SGD w/ Momentum                       & 16                  & -            & 0.01                   & cosine decay                          & -                 & 0.9                & -                  & 0.0005             & -               & -                   \\
\texttt{llama-100m} & Adam~\citep{kingma2014adam}       & 16                  & -            & 0.0001                 & cosine decay                          & -                 & 0.9                & 0.999              & 0.01               & -               & 1e-8                \\
\texttt{llama-100m} & AdamW~\citep{loshchilovdecoupled} & 16                  & -            & 0.0001                 & cosine decay                          & -                 & 0.9                & 0.999              & 0.05               & -               & 1e-8                \\
\texttt{llama-100m} & Lion~\citep{chen2024symbolic}     & 16                  & -            & 0.05                   & cosine decay                          & -                 & 0.9                & 0.999              & 0.01               & -               & -                   \\ \hline
\texttt{llama-100m} & {\sc Stacey}$_{(p,p)}$            & 16                  & 3            & 0.01                   & cosine decay                          & 0.1              & 0.9                & 0.99               & 0.01               & 0.001           & 1e-8                \\
\texttt{llama-100m} & {\sc Stacey}$_{(p,2)}$            & 16                  & 2.8          & 0.01                   & cosine decay                          & 0.1              & 0.9                & 0.99               & 0.0005             & 0.001           & 1e-8                \\ \hline
\end{tabular}
\end{table}


\end{document}